\newcommand{\network}{\mathcal{N}}
\newcommand{\rel}{\textbf{Rel}}
\newcommand{\circw}{\diamond}
\newcommand{\uni}{\mathcal{U}}
\newcommand{\sfp}{\textsf{b}}
\newcommand{\sfm}{\textsf{m}}
\newcommand{\sfo}{\textsf{o}}
\newcommand{\sfd}{\textsf{d}}
\newcommand{\sfs}{\textsf{s}}
\newcommand{\sff}{\textsf{f}}
\newcommand{\sfeq}{\textsf{eq}}
\newcommand{\sffi}{\textsf{fi}}
\newcommand{\sfsi}{\textsf{si}}
\newcommand{\sfoi}{\textsf{oi}}
\newcommand{\sfdi}{\textsf{di}}
\newcommand{\sfmi}{\textsf{mi}}
\newcommand{\sfpi}{\textsf{bi}}
\newcommand{\bdc}{{\bf DC}}
\newcommand{\bdr}{{\bf DR}}
\newcommand{\bec}{{\bf EC}}
\newcommand{\beq}{{\bf EQ}}
\newcommand{\bpo}{{\bf PO}}
\newcommand{\bpp}{{\bf PP}}
\newcommand{\bppi}{{\bf PP}^{-1}}
\newcommand{\btpp}{{\bf TPP}}
\newcommand{\btppi}{{\bf TPP}^{-1}}
\newcommand{\bntpp}{{\bf NTPP}}
\newcommand{\bntppi}{{\bf NTPP}^{-1}}
\newcommand{\horn}{\ensuremath{\mathcal{H}}}
\newcommand{\clb}{\ensuremath{\widehat{\mathcal{B}}}}
\newcommand{\Closure}[1]{\ensuremath{\widehat{#1}}}
\newcommand{\qcm}{\mathcal{M}}
\newcommand{\mbr}{\mathcal{M}}
\newcommand{\keywords}[1]{\par\addvspace\baselineskip
\noindent\keywordname\enspace\ignorespaces#1}
\begin{document}

\mainmatter  

\title{On Distributive Subalgebras of Qualitative Spatial and Temporal Calculi\thanks{Work supported by the Australian Research Council  under DP120104159 and FT0990811.}}

\titlerunning{On Distributive Subalgebras of Qualitative Spatial and Temporal Calculi}

%
%
\author{Zhiguo Long
\and
Sanjiang Li%
\thanks{Corresponding author: Sanjiang.Li@uts.edu.au}%
}
%

\institute{Centre for Quantum Computation \& Intelligent Systems,\\ Faculty of Engineering \& Information Technology,\\
University of Technology Sydney, Sydney, Australia
}

%
%

\maketitle

\begin{abstract}
Qualitative calculi play a central role in  representing and reasoning about qualitative spatial and temporal knowledge. This paper studies distributive subalgebras of qualitative calculi, which are subalgebras in which (weak) composition distributives over nonempty intersections. It has been proven for RCC5 and RCC8 that path consistent constraint network over a distributive subalgebra is always minimal and globally consistent (in the sense of strong $n$-consistency) in a qualitative sense. The well-known subclass of convex interval relations provides one such an example of distributive subalgebras. This paper first gives a characterisation of distributive subalgebras, which states that the intersection of a set of $n\geq 3$ relations in the subalgebra is nonempty if and only if the intersection of every two of these relations is nonempty. We further compute and generate all maximal distributive subalgebras for Point Algebra, Interval Algebra, RCC5 and RCC8, Cardinal Relation Algebra, and Rectangle Algebra. Lastly, we  establish two nice properties which will play an important role in efficient reasoning with constraint networks involving a large number of variables.

\keywords{Qualitative Calculi; Qualitative Spatial and Temporal Reasoning; Distributive Subalgebra; Region Connection Calculus; Rectangle Algebra}
\end{abstract}

\section{Introduction}

A dominant part of qualitative spatial and temporal reasoning (QSTR) research focus on the study of individual or multiple qualitative calculi. Roughly speaking, a qualitative calculus $\qcm$ is simply a finite class of relations over a universe $\uni$ of spatial or temporal entities which form a Boolean algebra. Usually, we assume that the identity relation is contained in one atomic relation in $\qcm$ and relations in $\qcm$ are closed under converse \cite{LigozatR04}.
Well-known qualitative calculi include Point Algebra (PA) \cite{VilainK86,Beek89} and Interval Algebra (IA) \cite{Allen83} for representing temporal relations and Region Connection Calculus RCC5 and RCC8 \cite{RandellCC92}, Cardinal Relation Algebra \cite{Frank91,Ligozat98}, and Rectangle Algebra \cite{Guesgen89,BalbianiCC99} for representing spatial relations.

For convenience, we write RCC5/8 for either RCC5 or RCC8. Since the composition of two RCC5/8 relations $R,S$ is not necessarily a relation in RCC5/8 \cite{DuntschWM01,LiY03a}, we write $R\diamond S$ for the smallest relation in RCC5/8 which contains $R\circ S$, the usual composition of $R,S$, and call $R\diamond S$ the \emph{weak composition} of $R,S$ \cite{DuntschWM01,LiY03a}. Unlike RCC5/8, the calculi PA, IA, CRA and RA are closed under composition and are all relation algebras. With weak composition, RCC5/8 is also a relation algebra.  

Using a qualitative calculus $\qcm$, we represent spatial or temporal information in terms of relations in $\qcm$, and formulate a spatial or temporal problem as a set of qualitative constraints (called a \emph{qualitative constraint network} or QCN). A qualitative constraint has the form $(x R y)$, which specifies that two variables $x,y$ are related by the relation $R$ in $\qcm$. A QCN $\network$ is \emph{consistent} if there exists an assignment of values in $\uni$ to variables in $\network$ such that all constraints in $\network$ are satisfied simultaneously. If this is the case, we call this assignment a \emph{solution} of $\network$. We say $\qcm$ is \emph{minimal} if,  for each constraint $(x R y)$ in $\network$, $R$ is the minimal (or \emph{strongest}) relation between $x$ and $y$ that is entailed by $\network$. We say $\network$ is \emph{ globally consistent} if every partial solution (i.e. a partial assignment that satisfies all constraints in a restriction of $\network$) can be extended to a solution of $\network$.   

The \emph{consistency problem} and the \emph{minimal labelling problem} (MLP) are two major reasoning tasks of QSTR research. The consistency problem decides whether a QCN has a solution and the MLP  decides if it is minimal. These problems have been investigated in depth in the past three decades for many qualitative calculi in the literature, see e.g. \cite{Allen83,AmaneddineC12,BalbianiCC99,ChandraP05,AIJPrime,Ligozat98,NebelB95,SioutisK12,Beek89}.

Both problems are in general NP-hard for IA, CRA, RCC5/8, and RA. Local consistency algorithms like path consistency algorithm (PCA) are designed for solving these problems approximately \cite{Allen83}. A QCN $\network=\{v_i R_{ij} v_j: 1\leq i,j\leq n\}$ is \emph{path consistent} (PC) if  
each $R_{ij}$ is non-empty and contained in the (weak) composition of $R_{ik}$ and $R_{kj}$ for any $k$. Applying PCA will either find an inconsistency in $\network$ in case $\network$ is not path consistent, or return a path consistent network that is equivalent to $\network$, which is also known as the \emph{algebraic closure} or \emph{a-closure} of $\network$ \cite{LigozatR04}. A QCN $\network=\{v_i R_{ij} v_j: 1\leq i,j\leq n\}$ is called \emph{basic} or \emph{atomic} if every relation $R_{ij}$ is an atomic relation in $\qcm$.

Interestingly, for every qualitative calculus $\qcm$ mentioned in this paper and any basic QCN $\network$ over $\qcm$, path consistency ensures consistency, i.e. any path consistent network has a solution. This holds also for any tractable subclass $\mathcal{S}$ of $\qcm$ that contains all basic relations and is closed under (weak) composition, intersection, and converse.
For example, it was found in \cite{NebelB95} that there exists a unique maximal tractable subclass of IA, written as $\horn$ and called ORD-Horn. It was proved, also in \cite{NebelB95}, that any path consistent network over $\horn$ is consistent.

In this paper, we study distributive subalgebras of qualitative calculi. We assume $\qcm$ is a qualitative calculus which satisfies the condition that every path consistent basic QCN has a solution. A subalgebra of $\qcm$ is a subclass of $\qcm$ that contains all atomic relations and is closed under (weak) composition, intersection, and converse. A subalgebra $\mathcal{S}$ is \emph{distributive} if (weak) composition distributives over nonempty intersection, i.e. $R\diamond (S\cap T)=(R\diamond S)\cap (R\diamond T)$ and $(S\cap T)\diamond R = (S\diamond R) \cap (T\diamond R)$ for any $R,S,T\in \mathcal{S}$ with $S\cap T\not=\varnothing$. 

Although distributive subalgebra is a new concept proposed recently in \cite{DuckhamLLL14,AIJPrime}, several examples of distributive subalgebras have been studied before. The first such a subalgebra, the subclass of convex IA relations $\mathcal{C}_{\mathrm{IA}}$, was found in \cite{Ligozat94}, where Ligozat also proved that path consistent networks over $\mathcal{C}_{\mathrm{IA}}$ is globally consistent. As every globally consistent network is minimal, this shows that path consistent networks over $\mathcal{C}_{\mathrm{IA}}$ is also minimal. Later, Chandra and Pujari \cite{ChandraP05} defined a class of convex RCC8 relations (written $D_{41}^8$ in \cite{AIJPrime} and this paper) and proved that every path consistent network over $D_{41}^8$ is minimal. More recently, Amaneddine and Condotta \cite{AmaneddineC12} found another subclass of IA, written as $\mathcal{S}_{\mathrm{IA}}$, and proved that $\mathcal{C}_{\mathrm{IA}}$ and $\mathcal{S}_{\mathrm{IA}}$ are the only maximal subalgebras of IA such that path consistent networks over which are globally consistent. It turns out that these subalgebras are all maximal distributive subalgebras of IA or RCC8 \cite{AIJPrime}.

The important concept of distributive subalgebra was also found very useful in identifying a subnetwork that is equivalent to a given one but has no redundant relations. Such a subnetwork is called a \emph{prime} subnetwork in \cite{DuckhamLLL14,AIJPrime}. It was proved there that every constraint network over a distributive subalgebra of RCC5/8 has a unique prime subnetwork, which can be found in cubic time; and, in contrast, it is in general NP-hard to decide if a constraint is non-redundant in an arbitrary RCC5/8 constraint network.  
The cubic time algorithm for finding the prime subnetwork is very useful in  applications such as computing, storing, and compressing the relationships between spatial objects and hence saving space for  storage and communication. We refer the reader to \cite{AIJPrime} for a real-world application example and detailed discussions.


As the focus of \cite{DuckhamLLL14} and \cite{AIJPrime} is redundancy in RCC5/8 constraint networks, there are several interesting topics left untouched, which are the subject of this paper. We first give a characterisation of distributive subalgebras in terms of intersections of relations and then compute and find all maximal distributive subalgebras for every qualitative calculus mentioned before. Lastly, we establish two nice properties regarding partial path consistency \cite{bliek1999path} and variable elimination \cite{ZhangM09} of constraint networks over a distributive subalgebra. These properties will play an important role in efficient reasoning with sparse constraint networks involving a large number of variables. 

The remainder of this paper is organised as follows. In Section 2, we first give a short introduction of the qualitative calculi mentioned above and recall basic notions including weak composition, path and global consistency. Section 3 then presents a characterisation of distributive subalgebras and Section 4 shows how we compute and find all maximal distributive subalgebras of these calculi. We then prove in Section~5 two important properties of distributive subalgebras that will be used in efficient reasoning with large sparse constraint networks. In Section 6 we discuss the connection between distributive subalgebras and conceptual neighbourhood graphs, and relation with classical CSPs. The last section then concludes the paper.

\section{Qualitative Calculi}

In this section, we first recall the qualitative calculi PA, IA, CRA, RCC5/8, and RA, and then,  recall some relevant notions and results of these constraint languages.

Suppose $\uni$ is a domain of spatial or temporal entities. Write $\rel(\uni)$ for the Boolean algebra of binary relations on $\uni$. A \emph{qualitative calculus} \cite{LigozatR04} $\qcm$ on $\uni$ is defined as a finite Boolean subalgebra of $\rel(\uni)$ {which has an atom that contains the identity relation $id_\mathcal{U}$ on $\uni$} and is closed under converse, i.e., $R$ is in $\qcm$ iff its converse $$R^{-1}=\{(a,b)\in \uni\times \uni: (b,a)\in R\}$$ is in $\qcm$ \cite{LigozatR04}. A relation $\alpha$ in a qualitative calculus $\qcm$ is \emph{atomic} or \emph{basic} if it is an atom in $\qcm$. {Note that the set of basic relations of a qualitative calculus is \emph{jointly exhaustive and pairwise disjoint} (JEPD).} Well-known qualitative calculi include, among others, PA \cite{VilainK86,Beek89}, IA \cite{Allen83}, CRA \cite{Frank91,Ligozat98}, RA \cite{Guesgen89,BalbianiCC99}, and RCC5 and RCC8 \cite{RandellCC92}.

\subsection{Point Algebra and Interval Algebra}

\begin{definition}[Point Algebra (PA) \cite{VilainK86}]
Let $\uni$ be the set of real numbers. The Point Algebra is the Boolean subalgebra generated by the JEPD set of relations $\{<,>,=\}$, where $<,>,=$ are defined as usual.
\end{definition}

PA contains eight relations, viz. the three basic relations $<,>,=$, the empty relation, the universal relation $\star$, and three non-basic relations $\leq,\geq,\neq$.

\begin{definition}[Interval Algebra (IA) \cite{Allen83}]
Let $\uni$ be the set of closed intervals on the real line.
Thirteen binary relations between two intervals $x=[x^-,x^+]$ and $y=[y^-,y^+]$ are defined by the order of the four endpoints of $x$ and $y$, see Table~\ref{tab:int}. The Interval Algebra is generated by these JEPD relations.
\end{definition}

We write
\begin{align}
\mathcal{B}_{\mathrm{IA}} = \{\sfp,\sfm,\sfo,\sfs,\sfd,\sff,\sfeq,\sffi,\sfdi,\sfsi,\sfoi,\sfmi,\sfpi\}
\end{align}
for the set of basic IA relations. Ligozat \cite{Ligozat94} defines the {\emph{dimension}} 
of a basic interval relation as 2 minus the number of equalities appearing in the definition of the relation (see Table~\ref{tab:int}). That is, for basic relations we have
\begin{align*}
\dim(\sfeq)=0, \dim(\sfm)=\dim(\sfs)=\dim(\sff)=1, \dim(\sfp)=\dim(\sfo)=\dim(\sfd)=2.
\end{align*}
For a non-basic relation $R$ we define
\begin{align}\label{lemma:dim-1}
\dim(R)=\max\{\dim(\theta): \theta\ \mbox{is a basic relation in}\ R\}.
\end{align}

\begin{table}[htbp]
\centering
\begin{tabular}{ccc}
\begin{tabular}{|c|c|c|c|c|}
  \hline
 Relation & Symb. & Conv. & Dim. &  Definition  \\ \hline
 before & \sfp & \sfpi & 2 & $x^+<y^-$  \\
 meets & \sfm & \sfmi & 1 & $x^+=y^-$  \\
 overlaps & \sfo & \sfoi & 2 & $x^-<y^-<x^+<y^+$ \\
 starts & \sfs & \sfsi & 1 & $x^-=y^-<x^+<y^+$  \\
 during & \sfd & \sfdi & 2 & $y^-<x^-<x^+<y^+$ \\
 finishes & \sff & \sffi & 1 & $y^-<x^-<x^+=y^+$ \\
 equals & \sfeq & \sfeq & 0 & $x^-=y^-<x^+=y^+$  \\
  \hline
\end{tabular}
&
\quad\quad 
&
\begin{tabular}{c}
 \includegraphics[width=.3\textwidth]{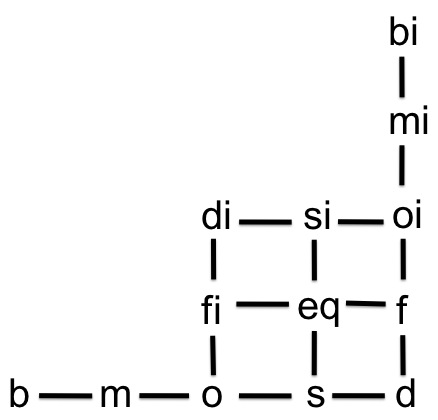}
 \end{tabular}
 \\
 (i) & \quad\quad & (ii)
\end{tabular}
\caption{IA basic relations (i) definitions and (ii) conceptual neighbourhood graph, where $x=[x^-,x^+],y=[y^-,y^+]$ are two intervals.} \label{tab:int}
\end{table}

Using the conceptual neighbourhood graph (CNG) of IA \cite{Freksa92a}, Ligozat \cite{Ligozat94} gives a geometrical characterisation for ORD-Horn relations. Consider the CNG of IA (shown in Table~\ref{tab:int} (ii)) as a {partially ordered set} $(\mathcal{B}_{int}, \preceq)$ (by interpreting any relation to be smaller than its right or upper neighbours). For {$\theta_1,\theta_2\in\mathcal{B}_{int}$ with $\theta_1 \preceq \theta_2$, we write $[\theta_1,\theta_2]$ as the set of basic interval relations $\theta$ such that $\theta_1\preceq \theta \preceq \theta_2$}, and call such a relation a \emph{convex} interval relation. An IA relation $R$ is called {\emph{pre-convex}} if it can be obtained from a convex relation by removing one or more basic relations with dimension lower than $R$. For example, $[\sfo,\sfeq]=\{\sfo,\sfs,\sffi,\sfeq\}$ is a convex relation and $\{\sfo,\sfeq\}$ is a pre-convex relation. Ligozat has shown that ORD-Horn relations are precisely pre-convex relations. Every path consistent  network over $\horn$ is consistent \cite{NebelB95}. In addition, every path consistent  network over $\mathcal{C}_{\mathrm{IA}}$ is globally consistent and minimal \cite{Ligozat94}.

\subsection{RCC5 and RCC8}
{The RCC5/8 constraint language is a fragment of the Region Connection Calculus (RCC) \cite{RandellCC92}. The RCC is a first order theory based on a binary connectedness relation and has canonical models defined over connected topological spaces \cite{stell2000bca,LiY03a}.}
Since applications in GIS and many other spatial reasoning tasks mainly consider objects represented in the real plane, in this paper, we interpret regions as non-empty regular closed sets in the plane, and say two regions are \emph{connected} if they have non-empty intersection.

\begin{definition}[RCC5 and RCC8 Algebras] \label{ex:rcc}
Let $\uni$ be the set of non-empty regular closed sets, or \emph{regions}, in the real plane. The RCC8 algebra is generated by the eight topological relations
\begin{equation*}\label{eq:rcc8}
\bdc,\bec,\bpo,\beq,\btpp,\bntpp,\btppi,\bntppi,
\end{equation*}
where $\bdc,\bec,\bpo,\btpp$ and $\bntpp$ are defined in Table~\ref{tab:RCC8}, $\beq$ is the identity relation, and $\btppi$ and $\bntppi$ are the converses of $\btpp$ and $\bntpp$ respectively (see Fig.~\ref{fig:basicRCC} for illustration). 
RCC5 is the sub-algebra of RCC8 generated by the five part-whole relations
\begin{equation*}\label{eq:rcc5}
\bdr,\bpo,\beq,\bpp,\bppi,
\end{equation*}
where $\bdr=\bdc\cup\bec$, $\bpp=\btpp\cup\bntpp$, and $\bppi=\btppi\cup\bntppi$.
\end{definition}

\begin{table}\centering
\begin{tabular}{c|c|c|c}
 Relation & Definition & Relation & Definition \\ \hline
 \bdc & $a\cap b=\varnothing$  & \btpp & $a\subset b$, $a\not\subset b^\circ$ \\
 \bec & $a\cap b\neq\varnothing$, $a^\circ\cap b^\circ=\varnothing$ & \bntpp & $a\subset b^\circ$\\
 \bpo & $a\nsubseteq b$, $b\nsubseteq a$, $a^\circ\cap b^\circ\neq\varnothing$ & \beq & $a=b$
\end{tabular}
\caption{Topological interpretation of basic RCC8 relations in the plane, where
$a,b$ are regions, and $a^\circ,b^\circ$ are the interiors of $a,b$, respectively.}\label{tab:RCC8}
\end{table}

\begin{figure}
\centering
  \includegraphics[width=.75\textwidth]{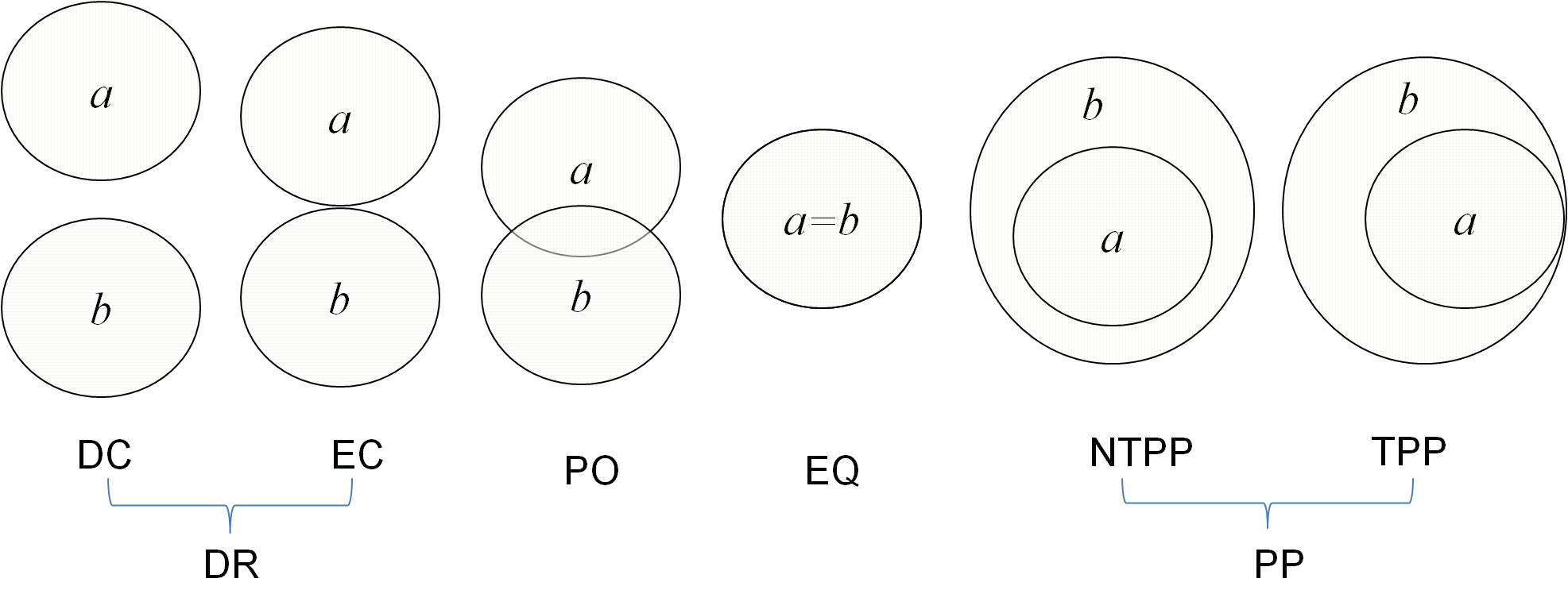}
  \caption{Illustration for basic relations in RCC5 / RCC8}\label{fig:basicRCC}
\end{figure}

\subsection{Cardinal Relation Algebra and Rectangle Algebra}

\begin{definition}[Cardinal Relation Algebra (CRA) \cite{Frank91,Ligozat98}]
Let $\uni$ be the real plane. Define binary relations $NW,N,NE,W,EQ,E,SW,S,SE$ as in Fig.~\ref{tab:CRA}.
The Cardinal Relation Algebra is generated by these nine JEPD relations.

\begin{figure}[htbp]
\begin{minipage}[b]{0.4\textwidth}\centering
\begin{tabular}{c|c}
Relation & Definition \\
\hline
NW & $x<x',y>y'$ \\
N &  $x=x',y>y'$ \\
NW & $x>x',y>y'$ \\
W  & $x<x',y=y'$ \\
EQ & $x=x',y=y'$ \\
E  & $x>x',y=y'$ \\
SW & $x<x',y<y'$ \\
S  & $x=x',y<y'$ \\
SW & $x>x',y<y'$ \\
\end{tabular}
\caption{Basic relations of CRA.}
\label{tab:CRA}
\end{minipage}
\begin{minipage}[b]{0.6\textwidth}\centering
\includegraphics[width=.9\textwidth]{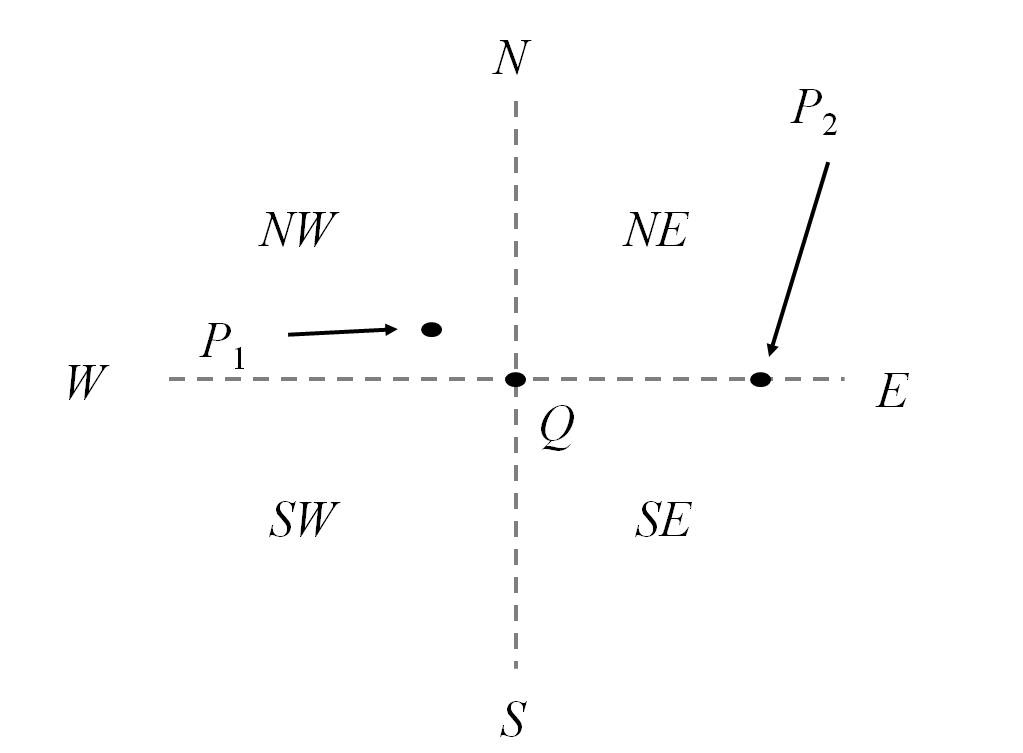}
\caption{Examples: $P_1\ \text{NW}\ Q$ and $P_2\ \text{E}\ Q$}
\label{fig:CRA}
\end{minipage}
\end{figure}

\end{definition}

\begin{figure}[htb]
\centering
\begin{tabular}{cc}

\includegraphics[width=.4\textwidth]{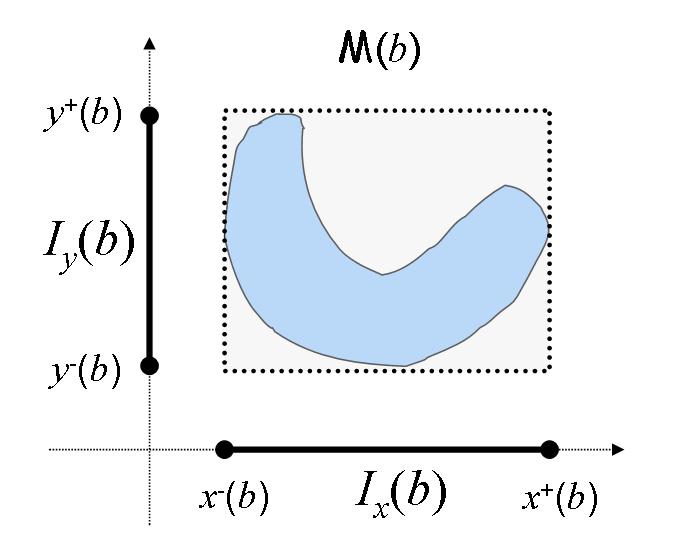}
&
\includegraphics[width=.4\textwidth]{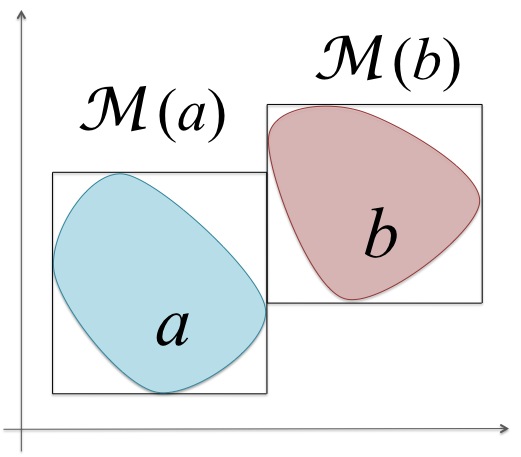}
\\
(i) & (ii)
 \end{tabular}
\caption{(i) The minimum bounding rectangle $\mbr(b)$ of a region $b$; (ii) the RA relation of $a$ to $b$ is $\sfm\otimes\sfo$. \label{mbr}}
\label{fig:mbr}
\end{figure}

CRA can be viewed as an extension  of  PA to the plane.
Similarly, IA can also be extended to regions in the plane. We assume an orthogonal basis in the Euclidean plane. For a bounded region $a$, its \emph{minimum bounding rectangle} (MBR), denoted by $\mbr(a)$, is the smallest rectangle which contains $a$ and whose sides are parallel to the axes of the basis.  We write $I_x(a)$ and $I_y(a)$ as, respectively, the $x$- and $y$-projections of $\mbr(a)$. The basic rectangle relation between two bounded regions $a,b$ is $\alpha\otimes\beta$ iff  $(I_x(a),I_x(b))\in\alpha$ and $(I_y(a),I_y(b))\in\beta$, where $\alpha,\beta$ are two basic IA relations (see Figure~\ref{fig:mbr} for illustration). We write $\mathcal{B}_{\mathrm{RA}}$ for the set of basic rectangle relations, \label{projection}
i.e., \label{otimes}
\begin{align}
\mathcal{B}_{\mathrm{RA}}= \{\alpha\otimes\beta:\alpha,\beta\in\mathcal{B}_{\mathrm{IA}}\}.
\end{align}
There are 169 different basic rectangle relations in $\mathcal{B}_{\mathrm{RA}}$. The Rectangle Algebra (RA) is the algebra generated by relations in $\mathcal{B}_{\mathrm{RA}}$ \cite{BalbianiCC99}.

Henceforth, for two IA relations $R,S$, we will write $R \otimes S$ for the (non-basic) relation $\{\alpha \otimes \beta: \alpha \in R, \beta \in S, \alpha,\beta \in \mathcal{B}\}$; analogously, for two subclasses of IA relations $\mathcal{R}$ and $\mathcal{S}$, we will write $\mathcal{R} \otimes \mathcal{S}$ for the set of RA relations $\{R \otimes S: R \in \mathcal{R}, S \in \mathcal{S}\}$. The following lemma is straightforward.
\begin{lemma}
Let $\Delta=\{v_i {(R_{ij}\otimes S_{ij})} v_j\}_{i,j=1}^n$ be an RA network, where ${R_{ij}}$ and ${S_{ij}}$ are arbitrary IA relations. Then $\Delta$ is satisfiable iff its projections $\Delta^x=\{x_i {R_{ij}}x_j\}_{i,j=1}^n$ and
$\Delta^y=\{y_i {S_{ij}} y_j\}_{i,j=1}^n$ are satisfiable IA networks.
\end{lemma}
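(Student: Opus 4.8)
The plan is to reduce satisfiability of $\Delta$ to the componentwise satisfiability of its two projections by exploiting the fact that the rectangle relation between two regions is completely determined by, and decomposes into, the pair of interval relations holding between their $x$- and $y$-projections. The single fact I would isolate first is the following pointwise decomposition: for any two bounded regions $a,b$ and any IA relations $R,S$, the pair $(a,b)$ satisfies $R\otimes S$ if and only if $(I_x(a),I_x(b))$ satisfies $R$ and $(I_y(a),I_y(b))$ satisfies $S$. This follows directly from the definition $R\otimes S=\{\alpha\otimes\beta:\alpha\in R,\beta\in S\}$ together with the observation that the map $(\alpha,\beta)\mapsto\alpha\otimes\beta$ is injective on pairs of basic IA relations (the $169$ basic rectangle relations are pairwise distinct), so that the unique basic rectangle relation $\alpha_0\otimes\beta_0$ holding between $a$ and $b$ lies in $R\otimes S$ iff $\alpha_0\in R$ and $\beta_0\in S$, i.e.\ iff the $x$-projections are related by $R$ and the $y$-projections by $S$.

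Granting this, both directions are short. For the forward direction, I would take a solution $\{a_i\}_{i=1}^n$ of $\Delta$ and set $x_i:=I_x(a_i)$ and $y_i:=I_y(a_i)$; applying the decomposition to each satisfied constraint $v_i(R_{ij}\otimes S_{ij})v_j$ shows simultaneously that $\{x_i\}$ solves $\Delta^x$ and $\{y_i\}$ solves $\Delta^y$, both of which are genuine IA networks since each $x_i,y_i$ is an interval. For the converse, I would take solutions $\{x_i\}$ of $\Delta^x$ and $\{y_i\}$ of $\Delta^y$ and assign to $v_i$ the axis-parallel rectangle $a_i$ whose $x$- and $y$-projections are exactly $x_i$ and $y_i$; such a rectangle always exists, is a legitimate region (a non-empty regular closed set), and is its own minimum bounding rectangle, so $I_x(a_i)=x_i$ and $I_y(a_i)=y_i$. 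The decomposition then turns the satisfied pairs of interval constraints back into satisfied rectangle constraints, so $\{a_i\}$ solves $\Delta$.

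There is no genuine obstacle here; the only points requiring a moment's care are the injectivity of $\otimes$ on basic relations (which guarantees the decomposition is an equivalence rather than merely an implication) and the remark that in the converse direction it suffices to realise the projected interval solutions by a single rectangle rather than by an arbitrary region. I would accordingly state the pointwise decomposition as a named sub-claim and then prove the lemma in the two-line fashion above, since everything else is routine bookkeeping over the index pairs $i,j$.
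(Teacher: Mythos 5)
Your proposal is correct: the pointwise decomposition via the JEPD property of basic IA relations (so each pair of regions determines a unique basic rectangle relation $\alpha_0\otimes\beta_0$, which lies in $R\otimes S$ iff $\alpha_0\in R$ and $\beta_0\in S$), plus realising the projected interval solutions by axis-parallel rectangles, is exactly the intended routine argument. The paper itself offers no proof --- it dismisses the lemma as ``straightforward'' --- so your write-up simply supplies the canonical details, including the two points (injectivity of $\otimes$ on basic pairs, and that rectangles are legitimate regions equal to their own minimum bounding rectangles) that make the equivalence go through.
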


{As a consequence, we know $\mathcal{H}\otimes \mathcal{H}$ is a tractable subclass of RA. No maximal tractable subclass has been identified for RA, but a larger tractable subclass of RA has been identified in \cite{BalbianiCC99}.}

\subsection{Properties of Qualitative Calculi}

While PA, IA, CRA and RA are all closed under composition, the composition of two basic RCC5/8 relations is not necessarily a relation in RCC5/8 \cite{DuntschWM01,LiY03a}. 

For two RCC5/8 relations $R$ and $S$, recall that we write $ R \diamond S $ for the weak composition of $ R $ and $ S $. 
Suppose $\alpha,\beta,\gamma$ are three basic RCC5/8 relations. Then we have
\begin{align}\label{eq:wc}
\gamma\in \alpha \circw \beta \Leftrightarrow \gamma \cap {(\alpha \circ \beta)} \not=\varnothing.
\end{align}
The weak composition of two (non-basic) relations $R$ and $S$ is computed as follows: 
\begin{align*}
R\circw S &=\bigcup\{\alpha\circw \beta: {\alpha\in R, \beta\in S}\}.
\end{align*}

Because PA, IA, CRA and RA are closed under composition, we have
\begin{proposition}
For $\qcm$ being PA, IA, CRA or RA, weak composition is the same as composition, i.e. for any $R,S\in\qcm$, we have $R\circ S=R\circw S$.
\end{proposition}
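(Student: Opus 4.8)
The plan is to reduce the equality $R \circ S = R \circw S$ to the single fact that each of these calculi is \emph{closed under composition}, i.e.\ that $R \circ S \in \qcm$ whenever $R, S \in \qcm$. Recall that, by definition, $R \circw S$ is the smallest relation in $\qcm$ containing $R \circ S$; hence $R \circ S \subseteq R \circw S$ always holds, and the two coincide precisely when $R \circ S$ is itself a member of $\qcm$. Thus the whole proposition follows once closure under composition is established for PA, IA, CRA, and RA.

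First I would observe that ordinary composition distributes over arbitrary unions: $R \circ S = \bigcup \{\alpha \circ \beta : \alpha \in R, \beta \in S\}$, where the union ranges over the basic relations contained in $R$ and $S$. Since each calculus is a Boolean subalgebra of $\rel(\uni)$ and hence closed under finite union, closure under composition reduces to the atomic case: it suffices to show that the composition $\alpha \circ \beta$ of any two basic relations lies in $\qcm$.

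For PA and IA this is the content of the classical composition tables. For PA one checks directly from the definitions of $<, =, >$ that the composition of two point relations is a union of point relations (e.g.\ $< \circ < = \; <$, while $< \circ > = \star$); for IA the corresponding statement is Allen's $13 \times 13$ composition table, which can be verified from the endpoint-order definitions in Table~\ref{tab:int} and is well known to take values in IA. For CRA and RA I would exploit the product structure. A basic CRA (resp.\ RA) relation is determined by a pair of PA (resp.\ IA) relations acting independently on the two coordinate axes, and because the $x$- and $y$-components of the underlying entities can be chosen independently, composition factors through the product: $(\alpha \otimes \beta) \circ (\gamma \otimes \delta) = (\alpha \circ \gamma) \otimes (\beta \circ \delta)$. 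As $\alpha \circ \gamma$ and $\beta \circ \delta$ already belong to PA (resp.\ IA) by the previous case, their $\otimes$-product belongs to CRA (resp.\ RA), so closure on atoms holds; distributivity over union then lifts this to arbitrary relations exactly as in the second step.

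The main obstacle is conceptual rather than computational: the genuinely nontrivial ingredient is the factorisation $(\alpha \otimes \beta) \circ (\gamma \otimes \delta) = (\alpha \circ \gamma) \otimes (\beta \circ \delta)$, which requires justifying that a witness $b$ for the composition on the $x$-axis can be combined with an independent witness on the $y$-axis to yield a single region whose MBR realises both coordinate relations simultaneously. Once this independence of axes is made precise (it is immediate for CRA points and follows for RA from the freedom in choosing MBRs, as already exploited in the projection Lemma above), the remaining verifications are routine, and the composition-table facts for PA and IA may simply be cited.
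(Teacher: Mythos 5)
Your proposal is correct and follows the same route as the paper: the paper derives the proposition in one line from the fact that PA, IA, CRA and RA are closed under composition (which it cites rather than proves), exactly your opening reduction via the definition of $\circw$ as the smallest relation in $\qcm$ containing $R\circ S$. The rest of your argument — distributing $\circ$ over unions of atoms, citing the PA/IA composition tables, and establishing the factorisation $(\alpha\otimes\beta)\circ(\gamma\otimes\delta)=(\alpha\circ\gamma)\otimes(\beta\circ\delta)$ by combining independent axis witnesses — is a sound filling-in of the closure fact the paper takes as known.
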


\begin{proposition}[See \cite{duntsch2005relation}]\label{prop:cycle}
With the weak composition operation $\circw${, the converse operation $^{-1}$, and the identity relation,} PA, IA, RCC5/8, CRA, and RA are relation algebras. {In particular, the weak composition operation $\circw$ is associative.}  
Moreover, for PA, IA, RCC5/8, CRA, and RA relations $R,S,T$, we have the following cycle law
\begin{align}\label{eq:cycle}
(R\circw S)\cap T\not=\varnothing  \Leftrightarrow (R^{-1} \circw T)\cap S\not=\varnothing &\Leftrightarrow (T\circw S^{-1})\cap R\not=\varnothing.
\end{align}
\end{proposition}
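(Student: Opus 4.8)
The plan is to check that each of the five calculi satisfies Tarski's axioms for a relation algebra, treating the genuinely composition-closed families separately from RCC5/8. Since $\rel(\uni)$ is itself a concrete relation algebra and, by the preceding proposition, $\circw=\circ$ for PA, IA, CRA and RA, for these four calculi every axiom is inherited from $\rel(\uni)$ once one notes that each is a Boolean subalgebra of $\rel(\uni)$ that is closed under $\circ$, closed under converse, and contains the identity; associativity and the cycle law then hold automatically. The whole difficulty is concentrated in RCC5/8, where $\circw$ strictly enlarges $\circ$, so I would organise the argument around the two substantive axioms — associativity of $\circw$ and the cycle law \eqref{eq:cycle} — the remaining Boolean, converse, distributivity ($R\circw(S\cup T)=(R\circw S)\cup(R\circw T)$) and identity ($R\circw\beq=R$) axioms being immediate from the defining formula $R\circw S=\bigcup\{\alpha\circw\beta:\alpha\in R,\beta\in S\}$ together with \eqref{eq:wc} and the fact that converse maps atoms to atoms.

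For the cycle law I would first reduce to atoms. Because $\circw$ distributes over union and every relation is a join of atoms, $(R\circw S)\cap T\neq\varnothing$ holds iff there are atoms $\alpha\in R$, $\beta\in S$, $\gamma\in T$ with $(\alpha\circw\beta)\cap\gamma\neq\varnothing$; and since $\gamma$ is an atom, \eqref{eq:wc} converts this into $(\alpha\circ\beta)\cap\gamma\neq\varnothing$. The same translation applies to the other two terms of \eqref{eq:cycle} (using that $\alpha^{-1},\beta^{-1}$ are again atoms). Thus the cycle law for $\circw$ on atoms is literally the cycle law for genuine composition $\circ$ on the same atoms, which is valid because $\rel(\uni)$ is a relation algebra; reassembling the atomic statements via distributivity yields \eqref{eq:cycle} for arbitrary $R,S,T$. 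This argument is uniform across all five calculi.

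Associativity is the main obstacle, and for RCC5/8 it is where genuine work is required. Reducing to atoms as before, $(\alpha\circw\beta)\circw\gamma$ and $\alpha\circw(\beta\circw\gamma)$ contain a given atom $\delta$ iff, respectively, there is an atom $\mu$ with $(\alpha\circ\beta)\cap\mu\neq\varnothing$ and $(\mu\circ\gamma)\cap\delta\neq\varnothing$, or an atom $\nu$ with $(\beta\circ\gamma)\cap\nu\neq\varnothing$ and $(\alpha\circ\nu)\cap\delta\neq\varnothing$. The subtlety is that each assertion is only a \emph{triangle} (three-point) consistency statement about a separate configuration that happens to share one label, whereas their equivalence is a \emph{quadrangle} (four-point) statement; it does not follow formally from the cycle law, because $\circw$, unlike $\circ$, carries no guarantee that compatible triangles amalgamate into a single four-point realisation. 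I expect this amalgamation gap to be the crux. I would close it in one of two ways: either (i) by a finite verification of the RCC8 weak-composition table — there are only eight atoms, and the quadruples $(\alpha,\beta,\gamma,\delta)$ to be checked are sharply reduced by exploiting closure under converse and the identity atom $\beq$ — after which the RCC5 case follows since RCC5 is a subalgebra; or (ii) by exhibiting RCC8 (hence RCC5) with $\circw$ as a \emph{representable} relation algebra over the canonical topological model of RCC, so that $\circw$ coincides with honest relational composition in the representing algebra and associativity is inherited. Route (ii) is conceptually cleaner and matches the cited result \cite{duntsch2005relation}, while route (i) is the self-contained fallback.

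Once associativity and the cycle law are in hand, verification of the remaining axioms is routine, so all five calculi are relation algebras under $\circw$ and \eqref{eq:cycle} holds in each, which is exactly the statement of the proposition.
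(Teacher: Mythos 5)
Your proposal is correct in substance, but note that the paper itself contains no proof of this proposition: it is imported wholesale by citation to \cite{duntsch2005relation}, so your sketch is a reconstruction of the standard argument rather than a variant of anything in the text. Your decomposition is the right one and matches how the literature proceeds: for PA, IA, CRA and RA everything is inherited from $\rel(\uni)$ because $\circw=\circ$ there; the cycle law \eqref{eq:cycle} for RCC5/8 reduces via distributivity over unions and \eqref{eq:wc} to the cycle law for genuine composition on atoms, which holds in $\rel(\uni)$; and you correctly isolate associativity of $\circw$ as the one axiom that does \emph{not} follow formally — algebras satisfying the Boolean, converse, identity and Peircean axioms but not associativity (non-associative algebras in Maddux's sense) exist, and weak-composition algebras of other qualitative calculi do fail associativity, so your ``amalgamation gap'' diagnosis is exactly right. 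Two small caveats. First, your claim that ``the RCC5 case follows since RCC5 is a subalgebra'' needs the additional observation that the weak composition of two RCC5 relations computed in RCC8 is again an RCC5 relation, so that the intrinsic RCC5 weak composition (smallest RCC5 relation containing $R\circ S$) coincides with the restriction of the RCC8 one; this is true but should be stated, since weak composition is relative to the ambient partition. Second, your route (ii) is mildly circular as formulated: representability of the abstract RCC8 table algebra is a \emph{stronger} fact than associativity, so invoking it only shifts the burden unless one genuinely cites it from the literature — which is, in effect, what the paper does; route (i), the finite verification of $(\alpha\circw\beta)\circw\gamma=\alpha\circw(\beta\circw\gamma)$ over the $8^3$ atomic triples (cut down by converse symmetry and the identity atom $\beq$), is the honest self-contained completion, and your atomwise membership criteria for the two sides are stated correctly, so nothing further is missing from the plan.
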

Figure~\ref{fig:cycle} gives an illustration of the cycle law. 

\begin{figure}[h]
\centering
\begin{tabular}{c}
\includegraphics[width=.6\columnwidth]{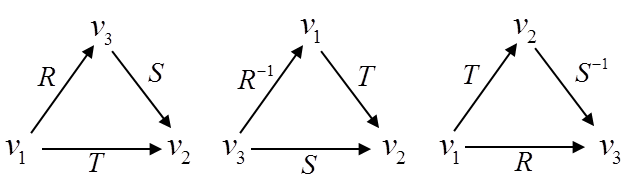} \\
\end{tabular}
\caption{Illustration of the cycle law (from \cite{AIJPrime}).}\label{fig:cycle}
\end{figure}

In the following, we assume $\circw$ takes precedence over $\cap$.


We say a network $\network=\{v_i R_{ij}v_j: { 1\leq i,j\leq n} \}$ is \emph{path consistent} if for every $1\leq i,j,k\leq n$, we have
\begin{align*}
{\varnothing\not=}R_{ij} \subseteq R_{ik}\diamond R_{kj}.
\end{align*}
In general, path consistency can be enforced by calling the following rule until an empty constraint occurs (then $\network$ is inconsistent) or the network becomes stable 
\begin{align*}
R_{ij} \leftarrow (R_{ik}\diamond R_{kj}) \cap R_{ij},
\end{align*}
where $1\leq i,j,k\leq n$ are arbitrary. A cubic time algorithm, henceforth called the \emph{path consistency algorithm} or PCA, has been devised to enforce path consistency. For any {qualitative constraint network} $\network$, the PCA either detects inconsistency of $\network$ or returns a path consistent network, written $\network_p$, which is equivalent to $\network$ and also known as the \emph{algebraic closure} or \emph{a-closure} of $\network$ \cite{LigozatR04}. It is easy to see that in this case $\network_p$ refines $\network$, i.e., we have $S_{ij} \subseteq R_{ij}$ for each constraint $(v_i S_{ij} v_j)$ in $\network_p$.

\begin{definition} \label{dfn:minimal-network}
Let $\qcm$ be a qualitative calculus with universe $\uni$. Suppose $\network=\{v_i T_{ij} v_j: 1\leq i,j\leq n\}$ is a QCN over $\qcm$ and $V=\{v_1,...,v_n\}$. For a pair of variables $v_i,v_j \in V$ ($i\not=j$) and a basic relation $\alpha$ in $T_{ij}$, we say $\alpha$ is \emph{feasible} if there exists a solution $(a_1,a_2,\ldots, a_n)$ in $U$ of $\network$ such that $(a_i,a_j)$ is an instance of $\alpha$. We say $\network$ is \emph{minimal} if $\alpha$ is feasible for every pair of variables $v_i,v_j$ ($i\not=j$) and every basic relation $\alpha$ in $T_{ij}$.

A \emph{scenario} of $\network$ is a basic network with form $\Theta=\{v_i \theta_{ij} v_j: 1\leq i,j\leq n\}$, where each $\theta_{ij}$ is a basic relation in $T_{ij}$. A scenario is consistent if it has a solution. We say $\network$ is \emph{weakly globally consistent} (\emph{globally consistent}, respectively) if any consistent scenario (solution, respectively) of $\network{\downarrow}_{V'}$ can be extended to a consistent scenario (solution, respectively) of $\network$, where $V'$ is any nonempty subset of $V$ and $\network{\downarrow}_{V'}$ is the restriction of $\network$ to $V'$.
\end{definition}

It is clear that every (weakly) globally consistent network is consistent and minimal.

In the following, we assume the qualitative calculus $\qcm$ has the following properties:
\begin{align}\label{eq:ra}
\mbox{$\qcm$ is a relation algebra with operations $\circw$, $id_\mathcal{U}$, and $^{-1}$;} \\
\label{eq:pc->c}
\mbox{Every path consistent basic network over $\qcm$ is consistent.}
\end{align}

\section{Distributive Subalgebras}

\begin{definition} \cite{AIJPrime} \label{dfn:distributive}
{Let $\qcm$ be a qualitative calculus. A subclass $\mathcal{S}$ of $\qcm$ is called} a \emph{subalgebra}  if $\mathcal{S}$ contains all basic relations and is closed under converse, weak composition, and intersection. A subalgebra $\mathcal{S}$ is \emph{distributive} if weak composition distributes over non-empty intersections of relations in $\mathcal{S}$, i.e. $R\diamond (S\cap T)=(R\diamond S)\cap (R\diamond T)$ and $(S\cap T)\diamond R = (S\diamond R) \cap (T\diamond R)$ for any $R,S,T\in \mathcal{S}$ with $S\cap T\not=\varnothing$. 
\end{definition}

Suppose $\mathcal{X}$ is a subclass of $\qcm$. We write $\Closure{\mathcal{X}}$ for the subalgebra of $\qcm$ generated by $\mathcal{X}$, i.e. $\Closure{\mathcal{X}}$ is the closure of $\mathcal{X}$ in $\qcm$ under intersection, weak composition, and converse. In particular, $\widehat{\mathcal{B}}$ denotes the closure of $\mathcal{B}$ in $\qcm$.  

\begin{proposition}
Let $\qcm$ be one of the calculi PA, IA, RCC5/8, CRA, RA and $\mathcal{B}$ the set of basic relations of $\qcm$. Then $\widehat{\mathcal{B}}$ is a distributive subalgebra.
\end{proposition}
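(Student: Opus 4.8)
The inclusion $R\diamond(S\cap T)\subseteq (R\diamond S)\cap(R\diamond T)$ holds in every subalgebra, since $S\cap T\subseteq S$, $S\cap T\subseteq T$, and weak composition is monotone in each argument (recall $R\diamond S=\bigcup\{\alpha\diamond\beta:\alpha\in R,\beta\in S\}$). The entire content of the proposition is therefore the reverse inclusion $(R\diamond S)\cap(R\diamond T)\subseteq R\diamond(S\cap T)$; the left-hand identity $(S\cap T)\diamond R=(S\diamond R)\cap(T\diamond R)$ will then follow by applying the converse operation, under which $\widehat{\mathcal{B}}$ is closed, together with $(S\cap T)^{-1}=S^{-1}\cap T^{-1}$. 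My plan is to reduce this reverse inclusion, via the cycle law, to a Helly-type intersection property of $\widehat{\mathcal{B}}$, and then to establish that property from the order structure of the conceptual neighbourhood graph (CNG).

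For the reduction, fix a basic relation $\gamma$ in $(R\diamond S)\cap(R\diamond T)$; I want to show $\gamma\subseteq R\diamond(S\cap T)$. From $\gamma\subseteq R\diamond S$ we get $(R\diamond S)\cap\gamma\ne\varnothing$, so the cycle law of Proposition~\ref{prop:cycle} gives $(R^{-1}\diamond\gamma)\cap S\ne\varnothing$; likewise $\gamma\subseteq R\diamond T$ yields $(R^{-1}\diamond\gamma)\cap T\ne\varnothing$. Writing $U=R^{-1}\diamond\gamma$, which again lies in $\widehat{\mathcal{B}}$, the three relations $U,S,T\in\widehat{\mathcal{B}}$ intersect pairwise, since in addition $S\cap T\ne\varnothing$ by hypothesis. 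If I can conclude that $U\cap S\cap T\ne\varnothing$, then $(R^{-1}\diamond\gamma)\cap(S\cap T)\ne\varnothing$, and a second application of the cycle law returns $(R\diamond(S\cap T))\cap\gamma\ne\varnothing$, i.e. $\gamma\subseteq R\diamond(S\cap T)$, as needed. Thus the proposition follows once I prove that \emph{any three pairwise-intersecting relations of $\widehat{\mathcal{B}}$ have a common basic relation}.

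To obtain this three-relation Helly property I would exploit that $\widehat{\mathcal{B}}$ is generated from the atoms and consists of \emph{interval} relations of the CNG. Regarding $\mathcal{B}$ as a finite lattice $(\mathcal{B},\preceq)$ induced by the CNG order (Ligozat's ordering of $\mathcal{B}_{\mathrm{IA}}$, and its coordinatewise analogues for CRA and RA), let $\mathcal{I}$ be the family of order-intervals $[\theta_1,\theta_2]=\{\theta:\theta_1\preceq\theta\preceq\theta_2\}$ together with $\varnothing$. Each atom is a singleton interval $[\theta,\theta]$, so $\mathcal{B}\subseteq\mathcal{I}$; moreover $\mathcal{I}$ is closed under converse (an order-reversing bijection of the lattice maps intervals to intervals), under intersection (since $[\theta_1,\theta_2]\cap[\theta_1',\theta_2']=[\theta_1\vee\theta_1',\,\theta_2\wedge\theta_2']$), and, crucially, under weak composition. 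Closure under all three operations forces $\widehat{\mathcal{B}}\subseteq\mathcal{I}$, so every relation of $\widehat{\mathcal{B}}$ is an order-interval. For intervals the Helly property is immediate: if $[a_i,b_i]$ ($i=1,2,3$) intersect pairwise then $a_i\preceq b_j$ for all $i,j$, hence $a_1\vee a_2\vee a_3\preceq b_1\wedge b_2\wedge b_3$, so the common interval $[\,a_1\vee a_2\vee a_3,\ b_1\wedge b_2\wedge b_3\,]$ is nonempty.

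The main obstacle is the highlighted closure step: verifying that the weak composition of two convex (interval) relations is again convex, and, beforehand, that the CNG of each calculus genuinely carries a lattice order for which the relations of $\widehat{\mathcal{B}}$ are order-intervals. For PA this is trivial, as $\mathcal{B}$ is a chain; for CRA and RA it reduces coordinatewise to PA and IA through the product structure (using the lemma reducing RA satisfiability to its $x$- and $y$-projections); and for IA it is Ligozat's characterisation that the convex relations $\mathcal{C}_{\mathrm{IA}}$ are exactly the CNG-intervals and are closed under composition. The genuinely delicate cases are RCC5 and RCC8: here one must fix an appropriate ordering of the five (resp. eight) basic relations and check, against the weak-composition table, that $\widehat{\mathcal{B}}$ (which for RCC8 is $D_{41}^8$) stays inside $\mathcal{I}$. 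I expect this composition-closure verification for RCC5/8 to be the part requiring the most care.
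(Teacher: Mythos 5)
Your cycle-law reduction of distributivity to the three-relation Helly property is sound---it is precisely the ``if'' direction of Theorem~\ref{thm:dis=helly}, proved the same way in the paper---and your interval argument does settle PA, IA, CRA, RA and RCC5: for RCC5 the diamond order $\bdr<\bpo<\{\bpp,\bppi\}<\beq$ is a lattice whose $14$ order-intervals are exactly the convex relations $\mathcal{D}^5_{14}\supseteq\widehat{\mathcal{B}}$, and IA/CRA/RA follow from Ligozat's closure of convex relations under composition plus the coordinatewise product structure. (The paper itself gives no structural proof; the proposition is certified by the machine check described in Section~4, so where your route works it is genuinely different.) The gap is RCC8. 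Your stated plan there---fix an ordering of the \emph{eight} basic relations and verify $\widehat{\mathcal{B}}\subseteq\mathcal{I}$---is provably unachievable: no partial order on the eight genuine basic relations makes every member of $\widehat{\mathcal{B}}$ an order-interval. Indeed, the sets $\btpp\circw\btpp=\{\btpp,\bntpp\}$, $\bpo\circw\btpp=\{\bpo,\btpp,\bntpp\}$, $\bec\circw\btpp=\{\bec,\bpo,\btpp,\bntpp\}$, $\bdc\circw\bec=\{\bdc,\bec,\bpo,\btpp,\bntpp\}$ and $\btpp\circw\bec=\{\bdc,\bec\}$, all in $\widehat{\mathcal{B}}$, can only be intervals if (up to reversing the order) $\bdc<\bec<\bpo<\btpp<\bntpp$ is a chain with no other element between $\bdc$ and $\bntpp$; their converses force $\bdc<\bec<\bpo<\btppi<\bntppi$ likewise; then $\bec\circw\bec=\{\bdc,\bec,\bpo,\btpp,\btppi,\beq\}$ can only be the interval $[\bdc,\beq]$, placing $\beq$ above the incomparable pair $\btpp,\btppi$ but incomparable with $\bntpp,\bntppi$; and now the universal relation $\bdc\circw\bdc\in\widehat{\mathcal{B}}$ has no upper endpoint, since no element dominates both $\bntpp$ and $\bntppi$. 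This obstruction is exactly why Chandra and Pujari---as Section~6 of the paper notes---must augment the CNG with the imaginary relations ${\btpp}',{\btppi}',{\bpo}'$.

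Passing to the augmented poset does not rescue the argument as written: your Helly step exhibits the witness interval $[a_1\vee a_2\vee a_3,\,b_1\wedge b_2\wedge b_3]$, which in the augmented lattice may a priori contain \emph{only} imaginary vertices and hence be empty as an RCC8 relation, so the Helly property of $\widehat{\mathcal{B}}$ no longer follows from interval structure alone and needs a separate verification---re-introducing precisely the kind of table-checking (or the paper's computation) you hoped to package away. A secondary error: your parenthetical identification of $\widehat{\mathcal{B}}_{\mathrm{RCC8}}$ with $D^8_{41}$ is false. Every distributive subalgebra contains $\widehat{\mathcal{B}}$ (it contains the basic relations and is closed under the three operations), and RCC8 has two distinct maximal distributive subalgebras, so $\widehat{\mathcal{B}}$ lies in their intersection and is strictly smaller than $D^8_{41}$. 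Only the containment $\widehat{\mathcal{B}}\subseteq D^8_{41}$ is true and relevant, but the Helly property of the convex RCC8 relations is then exactly what your interval argument fails to deliver.
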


This shows that the above definition of distributive subalgebra is well-defined for these calculi and every distributive subalgebra of $\mathcal{M}$ contains $\widehat{\mathcal{B}}$ as a subclass.

\subsection{Distributive Subalgebra Is Helly}
Helly's theorem \cite{danzer1963helly} is a very useful result in discrete geometry.  For $n$ convex subsets of $\mathbb{R}$, it says if the intersection of any two of them is non-empty, then the intersection of the whole collection is also non-empty. Interestingly, relations in a distributive subalgebra have a similar property as convex sets in the real line and, moreover, relations having such property are exactly those in a distributive subalgebra.

\begin{definition}
A subclass $\mathcal{S}$ of a qualitative calculus is called \emph{Helly} if, for every $R,S,T\in \mathcal{S}$, we have 
\begin{equation} \label{eq:helly}
R\cap S\cap T\not=\varnothing \quad \mbox{iff} \quad   R\cap S\not=\varnothing,\ R\cap T\not=\varnothing, \ S\cap T\not=\varnothing.
\end{equation}
\end{definition}
If $\mathcal{S}$ is a subalgebra, then it is straightforward to prove that $\mathcal{S}$ is Helly if and only if, for any $n$ relations $R_1,...,R_n$ in $\mathcal{S}$, we have \begin{equation} \label{eq:helly+}
\bigcap_{i=1}^n R_i \not=\varnothing \quad \mbox{iff} \quad (\forall 1\leq i\not=j\leq n)\  R_i\cap R_j\not=\varnothing
\end{equation}

The following result is first proved  for RCC5/8 in \cite{AIJPrime}. Following a similar proof, it is straightforward to show this holds in general.
\begin{lemma}[\cite{AIJPrime}]\label{lem:3x}
Suppose $\qcm$ is a qualitative calculus that satisfies \eqref{eq:ra}, i.e. $\qcm$, with the weak composition, the converse operation, and the identity relation, is a relation algebra. Then every distributive subalgebra of $\qcm$ is Helly.
\end{lemma}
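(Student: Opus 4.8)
The plan is to prove only the non-trivial (right-to-left) implication in \eqref{eq:helly}, since the forward direction is immediate: any pair witnessing $R\cap S\cap T\neq\varnothing$ also witnesses each of the three pairwise intersections. So I fix $R,S,T\in\mathcal{S}$ with $R\cap S\neq\varnothing$, $R\cap T\neq\varnothing$, and $S\cap T\neq\varnothing$, and aim to show $R\cap S\cap T\neq\varnothing$. The main device is the cycle law of Proposition~\ref{prop:cycle} applied with the identity as one of its arguments. Write $e=id_\mathcal{U}$ for the identity element of the relation algebra $\qcm$; for each of the calculi under consideration $e$ is an atom, and $U\circw e=U$ for every relation $U$.

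First I would record the translation of pairwise non-emptiness into a containment statement. Taking the cycle law with first slot $R$, second slot $e$, third slot $S$ gives $R\cap S=(R\circw e)\cap S\neq\varnothing \iff (R^{-1}\circw S)\cap e\neq\varnothing$. Because $\mathcal{S}$ is closed under converse and weak composition, $R^{-1}\circw S\in\mathcal{S}$ is a union of atoms of $\qcm$; since $e$ is itself an atom, $(R^{-1}\circw S)\cap e$ is either $e$ or $\varnothing$, so $(R^{-1}\circw S)\cap e\neq\varnothing$ holds exactly when $e\subseteq R^{-1}\circw S$. Hence $R\cap S\neq\varnothing \iff e\subseteq R^{-1}\circw S$, and by the same reasoning $R\cap T\neq\varnothing \iff e\subseteq R^{-1}\circw T$.

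Next I would invoke distributivity. Since $S\cap T\neq\varnothing$ and $\mathcal{S}$ is closed under intersection, converse and weak composition, every relation below lies in $\mathcal{S}$, so Definition~\ref{dfn:distributive} applies and gives $(R^{-1}\circw S)\cap(R^{-1}\circw T)=R^{-1}\circw(S\cap T)$. Combining this equation with the two containments from the previous step yields $e\subseteq R^{-1}\circw(S\cap T)$, that is, $(R^{-1}\circw(S\cap T))\cap e\neq\varnothing$. Finally I would run the cycle law in reverse, now with first slot $R^{-1}$, second slot $S\cap T$, third slot $e$, to obtain $(R^{-1}\circw(S\cap T))\cap e\neq\varnothing \iff (R\circw e)\cap(S\cap T)\neq\varnothing$, and the right-hand side is precisely $R\cap S\cap T\neq\varnothing$, as desired.

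The only real subtlety I expect is the equivalence $(R^{-1}\circw S)\cap e\neq\varnothing \iff e\subseteq R^{-1}\circw S$, which hinges on $e$ being an atom rather than a proper union of atoms; this is exactly where the qualitative-calculus hypothesis enters (the identity sits inside a single atom, and for PA, IA, RCC5/8, CRA, RA it is itself that atom and is the relation-algebra identity). Everything else is routine bookkeeping with the cycle law, and the distributivity step is legitimate precisely because the Helly hypothesis supplies $S\cap T\neq\varnothing$, which is the side condition in Definition~\ref{dfn:distributive}.
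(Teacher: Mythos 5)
Your proof is correct and takes essentially the same route as the proof the paper relies on (the one in \cite{AIJPrime}, to which the lemma is deferred): you translate each pairwise non-emptiness into the containment $e\subseteq R^{-1}\circw S$ via the cycle law with the identity atom, merge the two containments using distributivity under the side condition $S\cap T\not=\varnothing$, and read off $R\cap S\cap T\not=\varnothing$ by running the cycle law back. Your observation that under \eqref{eq:ra} the relation $id_\mathcal{U}$ is itself an atom (so that $(R^{-1}\circw S)\cap e\not=\varnothing$ iff $e\subseteq R^{-1}\circw S$) is precisely where the relation-algebra hypothesis enters, exactly as in the original argument.
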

Surprisingly, the above condition is also sufficient.

\begin{theorem}\label{thm:dis=helly}
Suppose $\qcm$ is a qualitative calculus that satisfies \eqref{eq:ra}.  Let $\mathcal{S}$ be a subalgebra of $\qcm$. Then $\mathcal{S}$ is distributive if and only if it is Helly.
\end{theorem}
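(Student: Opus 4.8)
The forward direction---distributive implies Helly---is exactly Lemma~\ref{lem:3x}, so the plan is to establish the converse: assuming $\mathcal{S}$ is Helly, I would show that weak composition distributes over nonempty intersections. Fix $R,S,T\in\mathcal{S}$ with $S\cap T\neq\varnothing$. The inclusion $R\circw(S\cap T)\subseteq(R\circw S)\cap(R\circw T)$ holds unconditionally, since $S\cap T\subseteq S$ and $S\cap T\subseteq T$ and weak composition is monotone; so the only content is the reverse inclusion. Because every relation is a union of atoms and distinct atoms are disjoint, it suffices to take an arbitrary atom $\gamma$ with $\gamma\subseteq(R\circw S)\cap(R\circw T)$ and prove $\gamma\subseteq R\circw(S\cap T)$; here I use that for an atom $\gamma$ one has $\gamma\subseteq X$ iff $\gamma\cap X\neq\varnothing$.

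The key step is to convert each of these atom-membership statements into a pairwise non-emptiness condition via the cycle law \eqref{eq:cycle}. Writing $U=R^{-1}\circw\gamma$, which again lies in $\mathcal{S}$ since $\mathcal{S}$ is closed under converse and weak composition, the cycle law (with the role of its third argument played by the atom $\gamma$) gives $\gamma\subseteq R\circw S$ iff $S\cap U\neq\varnothing$, and likewise $\gamma\subseteq R\circw T$ iff $T\cap U\neq\varnothing$. Thus from the hypotheses on $\gamma$ I obtain $S\cap U\neq\varnothing$ and $T\cap U\neq\varnothing$, while the standing assumption supplies $S\cap T\neq\varnothing$. These are precisely the three pairwise intersections in the Helly condition \eqref{eq:helly} applied to the triple $S,T,U\in\mathcal{S}$, so Helly yields $S\cap T\cap U\neq\varnothing$, that is $(S\cap T)\cap U\neq\varnothing$. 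Reading the cycle law backwards (now with $S\cap T$ in place of $S$) then gives $\gamma\subseteq R\circw(S\cap T)$, completing the reverse inclusion.

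For the second identity $(S\cap T)\circw R=(S\circw R)\cap(T\circw R)$, I would avoid repeating the argument by passing to converses: since $(S\cap T)^{-1}=S^{-1}\cap T^{-1}\neq\varnothing$, and converse distributes over intersection while reversing weak composition, applying the already-proved identity to $R^{-1},S^{-1},T^{-1}$ and then taking converses of both sides yields the claim. The only delicate point is the atom-wise reduction together with choosing the correct orientation of the cycle law; once $U=R^{-1}\circw\gamma$ is identified as the right auxiliary relation, the Helly hypothesis plugs in directly and does all the work, so I do not expect any serious obstacle beyond this bookkeeping.
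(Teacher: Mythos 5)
Your proposal is correct and follows essentially the same route as the paper: the converse direction is proved atom-by-atom with the auxiliary relation $R^{-1}\circw\gamma$, two applications of the cycle law \eqref{eq:cycle}, and the Helly condition \eqref{eq:helly} applied to the triple $S$, $T$, $R^{-1}\circw\gamma$, the paper merely phrasing the same equivalences contrapositively (as $\gamma\not\in R\circw(S\cap T)$ iff $\gamma\not\in R\circw S$ or $\gamma\not\in R\circw T$). Your explicit converse argument for the right-sided identity $(S\cap T)\circw R=(S\circw R)\cap(T\circw R)$ and your explicit appeal to $S\cap T\neq\varnothing$ in the Helly step only spell out details the paper leaves tacit.
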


\begin{proof}
Since Lemma~\ref{lem:3x} already shows the ``only if'' part, we only need to show the ``if'' part. Suppose $R,S,T$ are three relations in $\mathcal{S}$. We first note  
$R\circw (S\cap T)\subseteq R\circw S\cap R\circw T$. Furthermore, for any basic relation $\gamma$, by using the cycle law twice, we have
\begin{align*}
\gamma\not\in R\circw(S\cap T) 
&\Leftrightarrow \{\gamma\}\cap R\circw(S\cap T)=\varnothing \\
& 
\Leftrightarrow R^{-1}\circw \gamma \cap S\cap T =\varnothing \\
& \Leftrightarrow R^{-1}\circw \gamma \cap S=\varnothing \ \mbox{or}\ R^{-1}\circw\gamma\cap T=\varnothing\\
&\Leftrightarrow \{\gamma\}\cap R\circw S=\varnothing \ \mbox{or}\ \{\gamma\}\cap R\circw T=\varnothing\\
&\Leftrightarrow \gamma\not\in R\circw S\ \mbox{or}\ \gamma\not\in R\circw T.
\end{align*}
This shows $R\circw (S\cap T)=R\circw S\cap R\circw T$.
That is, $\mathcal{S}$ is Helly only if it is distributive.
\qed
\end{proof}

\subsection{Path Consistency Implies Weakly Global Consistency}

We have the following very important result for distributive subalgebras.
\begin{theorem}\label{thm:pc-wgc}
Let $\qcm$ be a qualitative calculus that satisfies \eqref{eq:ra} and \eqref{eq:pc->c}. Suppose $\mathcal{S}$ is a distributive subalgebra of $\qcm$. Then every path consistent network over $\mathcal{S}$ is weakly globally consistent and minimal.
\end{theorem}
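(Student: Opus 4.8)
The plan is to prove the substantive half, \emph{weak global consistency}; \emph{minimality} then follows immediately, since the excerpt already records that every weakly globally consistent network is minimal. Unwinding Definition~\ref{dfn:minimal-network}, weak global consistency says that any consistent scenario of a restriction $\network{\downarrow}_{V'}$ extends to a consistent scenario of the whole network. I would first reduce this to a \emph{one-variable extension lemma}: if $\network$ is path consistent over $\mathcal{S}$, $\Theta$ is a consistent scenario on $V'\subsetneq V$, and $v\in V\setminus V'$, then $\Theta$ extends to a consistent scenario on $V'\cup\{v\}$. Granting the lemma, weak global consistency follows by induction on $|V\setminus V'|$, adding the missing variables one at a time; at each stage the relevant restriction of $\network$ is still path consistent over $\mathcal{S}$ (path consistency is inherited by restrictions), so the lemma applies.

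For the extension lemma, the decisive simplification is that by property~\eqref{eq:pc->c} it suffices to produce a \emph{path consistent} basic extension: once the new star $\{v\,\eta_w\,w : w\in V'\}$ together with $\Theta$ forms a path consistent basic network, \eqref{eq:pc->c} upgrades it to a consistent scenario. The task thus becomes purely combinatorial: choose, for each $w\in V'$, a basic relation $\eta_w$ refining $T_{v,w}$ so that every triangle $(v,w,w')$ is path consistent. I would first refine each edge $(v,w)$ through the \emph{entire} scenario, setting $C_w := T_{v,w}\cap\bigcap_{u\in V'\setminus\{w\}}(T_{v,u}\diamond\theta_{u,w})$, and then select atoms greedily: processing $V'=\{w_1,\dots,w_k\}$ in order, pick $\eta_{w_t}\in D_t:=C_{w_t}\cap\bigcap_{s<t}(\eta_{w_s}\diamond\theta_{w_s,w_t})$. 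By construction each chosen atom is path consistent with the previously chosen ones and with $\Theta$, so the final star is path consistent; the only thing left to verify is that every $D_t$ (in particular every $C_w$) is non-empty.

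This is where distributivity does the real work. Since $\mathcal{S}$ is distributive it is Helly (Theorem~\ref{thm:dis=helly}), so by \eqref{eq:helly+} each $D_t$ is non-empty as soon as its defining relations intersect \emph{pairwise}. The pairwise conditions come in two shapes, both discharged by the cycle law \eqref{eq:cycle} and associativity of $\diamond$. Intersections of the form $T_{v,w}\cap(T_{v,u}\diamond\theta_{u,w})$ collapse, via one cycle-law step, to $(T_{u,v}\diamond T_{v,w})\cap\theta_{u,w}$, which equals $\theta_{u,w}$ because path consistency of $\network$ gives $\theta_{u,w}\subseteq T_{u,w}\subseteq T_{u,v}\diamond T_{v,w}$. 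Intersections of two ``two-step'' relations $(P\diamond\theta_{u,w})\cap(Q\diamond\theta_{u',w})$ are handled by a small lemma: if $m\subseteq P^{-1}\diamond Q$ and $\theta_{u,w}\cap(m\diamond\theta_{u',w})\neq\varnothing$, then the intersection is non-empty; taking $m=\theta_{u,u'}$, the two hypotheses are precisely path consistency of $\network$ ($\theta_{u,u'}\subseteq T_{u,v}\diamond T_{v,u'}$) and of $\Theta$ ($\theta_{u,w}\subseteq\theta_{u,u'}\diamond\theta_{u',w}$); the same lemma, with $P$ or $Q$ an already-chosen atom $\eta_{w_s}$, covers the mixed pairs, using that $\eta_{w_s}\in C_{w_s}$ already satisfies $\eta_{w_s}\subseteq T_{v,w_t}\diamond\theta_{w_t,w_s}$.

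The main obstacle is exactly this non-emptiness bookkeeping: the greedy selection must never get stuck. A naive refinement of $(v,w)$ through only the \emph{already-processed} variables can fail (this is already visible in Point Algebra, where committing $v$ prematurely to one side of a point forces an empty candidate later). The remedy is to refine every new edge through the \emph{whole} scenario up front when forming the $C_w$, which guarantees that each selected atom stays compatible with all raw edges $T_{v,w_t}$; distributivity, in the guise of the Helly property, is then what lifts these purely pairwise compatibilities to joint non-emptiness of $D_t$. I expect the delicate points to be stating the two pairwise lemmas cleanly and confirming that a single pass of refinement through the scenario is enough for the greedy choice to succeed.
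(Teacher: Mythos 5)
Your proposal is correct and follows essentially the same route as the paper's appendix proof: a one-variable extension argument in which each new edge is first refined through the \emph{entire} scenario (your $C_w$ is the paper's $\widehat{R}_{k+1,i}$), atoms are then chosen greedily, joint non-emptiness of each candidate set is reduced to pairwise intersections via the Helly property (Theorem~\ref{thm:dis=helly}) and discharged by the cycle law \eqref{eq:cycle}, and finally \eqref{eq:pc->c} upgrades the resulting path consistent basic network to a consistent scenario. The only cosmetic difference is that the paper once invokes distributivity explicitly (distributing $\diamond$ over the intersection defining $\widehat{R}_{k+1,t+1}$ in its Statement~1), whereas you absorb that case into the same pairwise cycle-law bookkeeping via your small lemma with $m=\theta_{u,u'}$.
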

This result was first proved for RCC5/8 in \cite{AIJPrime}. If every path consistent network over $\mathcal{S}$ is \emph{consistent}, then, following the proof in \cite[Theorem~18]{AIJPrime}, we can show every path consistent network over $\mathcal{S}$ is also weakly globally consistent and minimal. From the analysis in the following section, we can easily see that this is the case for PA, IA, CRA, RA, and RCC5/8. To show the general case, the proof is given in Appendix.

\section{Maximal Distributive Subalgebras}

A distributive subalgebra $\mathcal{S}$ is \emph{maximal} if there is no other distributive subalgebra that properly contains $\mathcal{S}$. In this section, we compute and list all maximal distributive subalgebras for RA, IA, CRA, RA, and RCC5/8.

\subsection{Maximal Distributive Subalgebras of PA, IA, RCC5, and RCC8}

Let $\mathcal{X}$ be a subclass of $\qcm$. Recall we write $\Closure{\mathcal{X}}$ for the subalgebra of $\qcm$ generated by $\mathcal{X}$ and write $\mathcal{B}$ for the set of basic relations in $\qcm$.
For $\qcm$ being PA, IA, RCC5, or RCC8, to compute the maximal distributive subalgebras of $\qcm$, we first compute $\clb$, and then check by a program if $\Closure{\clb\cup \mathcal{Z}}$ satisfies distributivity for some subset $\mathcal{Z}$ of $\mathcal{M}$.

Write $\mathcal{D}$ for the set of relations $R$ in $\mathcal{M}$ such that $\Closure{\clb\cup\{R\}}$ satisfies distributivity. We then check for every pair of relations $R,S$ in $\mathcal{D}$ if $\Closure{\clb\cup\{R,S\}}$ satisfies distributivity. If this is the case, then we say  $R$ has d-relation to $S$. Fortunately, the result shows that there are precisely two disjoint subsets $\mathcal{X}$ and $\mathcal{Y}$ (which form a partition of $\mathcal{D}$) such that each relation $R$ in $\mathcal{X}$ ($\mathcal{Y}$, respectively) has d-relation to every other relation in $\mathcal{X}$ ($\mathcal{Y}$, respectively), but has no d-relation to any relation in $\mathcal{Y}$ ($\mathcal{X}$, respectively).  Moreover, $\Closure{\clb\cup \mathcal{X}}$ and $\Closure{\clb\cup \mathcal{Y}}$ are both distributive subalgebras of $\mathcal{M}$. It is clear that these are the only maximal distributive subalgebras of $\mathcal{M}$. 

In the following, we list the maximal distributive subalgebras of PA and IA and refer to  \cite[Appendix B]{AIJPrime} for those of RCC5 and RCC8.

\subsubsection{PA.}
The closure of basic relations of PA contains 4 non-empty relations 
\begin{align}
\clb_{\mathrm{PA}}=\{<,>,=,\star\}.
\end{align}
One of the maximal distributive subalgebras contains 6 non-empty relations
\begin{align}
<,>,=,\star,\le,\ge,
\end{align}
which is exactly the subclass $\mathcal{C}_{\mathrm{PA}}$ of convex PA relations; the other  contains 5 non-empty relations 
\begin{align}
<,>,=,\star,\ne,
\end{align}
which is exactly the subclass $\mathcal{S}_{\mathrm{PA}}$ identified in \cite{AmaneddineC12}.

\subsubsection{IA.}
The closure of basic IA relations, $\clb_{\mathrm{IA}}$, contains 29 non-empty relations (see Table~\ref{tab:IAC}). Our computation shows that IA has two maximal distributive subalgebra, one contains additional 53 non-empty relations, shown in Table~\ref{tab:IAC1}, which is exactly the subclass $\mathcal{C}_{\mathrm{IA}}$ of convex IA relations; the other contains additional 52 non-empty relations, shown in Table~\ref{tab:IAC2}, which is exactly the subclass $\mathcal{S}_{\mathrm{IA}}$ identified in \cite{AmaneddineC12}.

\begin{table}
  \centering
  \caption{The closure of basic IA relations, $\mathcal{B}_{\mathrm{IA}}$, contains 29 non-empty relations.}\label{tab:IAC}
    \begin{tabular}{lcr}
     \sfeq &\quad &   \sfd{}  \sfoi{} \sff{}  \\
     \sffi{} &\quad &  \sfd{}  \sfo{}  \sfs{}  \\
     \sff{}  &\quad &  \sfd{}  \sfdi{} \sfo{}  \sfoi{} \sfs{}  \sfsi{} \sff{}  \sffi{} \sfeq{} \\
     \sff{}  \sffi{} \sfeq{} &\quad &  \sfpi{} \\
     \sfsi{} &\quad &  \sfpi{} \sfoi{} \sfmi{} \\
     \sfs{}  &\quad &  \sfpi{} \sfdi{} \sfoi{} \sfmi{} \sfsi{} \\
     \sfs{}  \sfsi{} \sfeq{} &\quad &  \sfpi{} \sfd{}  \sfoi{} \sfmi{} \sff{}  \\
     \sfmi{} &\quad &  \sfpi{} \sfd{}  \sfdi{} \sfo{}  \sfoi{} \sfmi{} \sfs{}  \sfsi{} \sff{}  \sffi{} \sfeq{} \\
     \sfm{}  &\quad &  \sfp{}  \\
     \sfoi{} &\quad &  \sfp{}  \sfo{}  \sfm{}  \\
     \sfo{}  &\quad &  \sfp{}  \sfdi{} \sfo{}  \sfm{}  \sffi{} \\
     \sfdi{} &\quad &  \sfp{}  \sfd{}  \sfo{}  \sfm{}  \sfs{}  \\
     \sfdi{} \sfoi{} \sfsi{} &\quad &  \sfp{}  \sfd{}  \sfdi{} \sfo{}  \sfoi{} \sfm{}  \sfs{}  \sfsi{} \sff{}  \sffi{} \sfeq{} \\
     \sfdi{} \sfo{}  \sffi{} &\quad &  \sfp{}  \sfpi{} \sfd{}  \sfdi{} \sfo{}  \sfoi{} \sfm{}  \sfmi{} \sfs{}  \sfsi{} \sff{}  \sffi{} \sfeq{} \\
     \sfd{}  &\quad &  \\
    \end{tabular}%
\end{table}%
\begin{table}
  \centering
  \caption{Additional relations contained in $\mathcal{C}_{\mathrm{IA}}$.}\label{tab:IAC1}
    \begin{tabular}{lcccr}
    \sffi{} \sfeq{} &\quad &  \sfdi{} \sffi{} &\quad &  \sfd{}  \sfdi{} \sfo{}  \sfoi{} \sfmi{} \sfs{}  \sfsi{} \sff{}  \sffi{} \sfeq{} \\
     \sff{}  \sfeq{} &\quad &  \sfdi{} \sfsi{} &\quad &  \sfd{}  \sfdi{} \sfo{}  \sfoi{} \sfm{}  \sfs{}  \sfsi{} \sff{}  \sffi{} \sfeq{} \\
     \sfsi{} \sfeq{} &\quad &  \sfdi{} \sfsi{} \sffi{} \sfeq{} &\quad &  \sfd{}  \sfdi{} \sfo{}  \sfoi{} \sfm{}  \sfmi{} \sfs{}  \sfsi{} \sff{}  \sffi{} \sfeq{} \\
     \sfs{}  \sfeq{} &\quad &  \sfdi{} \sfoi{} \sfsi{} \sff{}  \sffi{} \sfeq{} &\quad &  \sfpi{} \sfmi{} \\
     \sfoi{} \sff{}  &\quad &  \sfdi{} \sfoi{} \sfmi{} \sfsi{} &\quad &  \sfpi{} \sfoi{} \sfmi{} \sff{}  \\
     \sfoi{} \sfsi{} &\quad &  \sfdi{} \sfoi{} \sfmi{} \sfsi{} \sff{}  \sffi{} \sfeq{} &\quad &  \sfpi{} \sfoi{} \sfmi{} \sfsi{} \\
     \sfoi{} \sfsi{} \sff{}  \sfeq{} &\quad &  \sfdi{} \sfo{}  \sfs{}  \sfsi{} \sffi{} \sfeq{} &\quad &  \sfpi{} \sfoi{} \sfmi{} \sfsi{} \sff{}  \sfeq{} \\
     \sfoi{} \sfmi{} &\quad &  \sfdi{} \sfo{}  \sfm{}  \sffi{} &\quad &  \sfpi{} \sfdi{} \sfoi{} \sfmi{} \sfsi{} \sff{}  \sffi{} \sfeq{} \\
     \sfoi{} \sfmi{} \sff{}  &\quad &  \sfdi{} \sfo{}  \sfm{}  \sfs{}  \sfsi{} \sffi{} \sfeq{} &\quad &  \sfpi{} \sfd{}  \sfoi{} \sfmi{} \sfs{}  \sfsi{} \sff{}  \sfeq{} \\
     \sfoi{} \sfmi{} \sfsi{} &\quad &  \sfd{}  \sff{}  &\quad &  \sfpi{} \sfd{}  \sfdi{} \sfo{}  \sfoi{} \sfm{}  \sfmi{} \sfs{}  \sfsi{} \sff{}  \sffi{} \sfeq{} \\
     \sfoi{} \sfmi{} \sfsi{} \sff{}  \sfeq{} &\quad &  \sfd{}  \sfs{}  &\quad &  \sfp{}  \sfm{}  \\
     \sfo{}  \sffi{} &\quad &  \sfd{}  \sfs{}  \sff{}  \sfeq{} &\quad &  \sfp{}  \sfo{}  \sfm{}  \sffi{} \\
     \sfo{}  \sfs{}  &\quad &  \sfd{}  \sfoi{} \sfs{}  \sfsi{} \sff{}  \sfeq{} &\quad &  \sfp{}  \sfo{}  \sfm{}  \sfs{}  \\
     \sfo{}  \sfs{}  \sffi{} \sfeq{} &\quad &  \sfd{}  \sfoi{} \sfmi{} \sff{}  &\quad &  \sfp{}  \sfo{}  \sfm{}  \sfs{}  \sffi{} \sfeq{} \\
     \sfo{}  \sfm{}  &\quad &  \sfd{}  \sfoi{} \sfmi{} \sfs{}  \sfsi{} \sff{}  \sfeq{} &\quad &  \sfp{}  \sfdi{} \sfo{}  \sfm{}  \sfs{}  \sfsi{} \sffi{} \sfeq{} \\
     \sfo{}  \sfm{}  \sffi{} &\quad &  \sfd{}  \sfo{}  \sfs{}  \sff{}  \sffi{} \sfeq{} &\quad &  \sfp{}  \sfd{}  \sfo{}  \sfm{}  \sfs{}  \sff{}  \sffi{} \sfeq{} \\
     \sfo{}  \sfm{}  \sfs{}  &\quad &  \sfd{}  \sfo{}  \sfm{}  \sfs{}  &\quad &  \sfp{}  \sfd{}  \sfdi{} \sfo{}  \sfoi{} \sfm{}  \sfmi{} \sfs{}  \sfsi{} \sff{}  \sffi{} \sfeq{} \\
     \sfo{}  \sfm{}  \sfs{}  \sffi{} \sfeq{} &\quad &  \sfd{}  \sfo{}  \sfm{}  \sfs{}  \sff{}  \sffi{} \sfeq{} &\quad &  \\
    \end{tabular}%
\end{table}%
\begin{table}
  \centering
  \caption{Additional relations contained in $\mathcal{S}_{\mathrm{IA}}$.}\label{tab:IAC2}
    \begin{tabular}{lcccr}
     \sff{}  \sffi{} &\quad &  \sfpi{} \sfd{}  \sfdi{} \sfo{}  \sfoi{} \sfs{}  \sfsi{} &\quad &  \sfp{}  \sfd{}  \sfdi{} \sfo{}  \sfoi{} \sfm{}  \sfs{}  \sfsi{} \\
     \sfs{}  \sfsi{} &\quad &  \sfpi{} \sfd{}  \sfdi{} \sfo{}  \sfoi{} \sfs{}  \sfsi{} \sff{}  \sffi{} \sfeq{} &\quad &  \sfp{}  \sfpi{} \sfd{}  \sfdi{} \sfo{}  \sfoi{} \\
     \sfdi{} \sfoi{} &\quad &  \sfpi{} \sfd{}  \sfdi{} \sfo{}  \sfoi{} \sfmi{} &\quad &  \sfp{}  \sfpi{} \sfd{}  \sfdi{} \sfo{}  \sfoi{} \sff{}  \sffi{} \\
     \sfdi{} \sfo{}  &\quad &  \sfpi{} \sfd{}  \sfdi{} \sfo{}  \sfoi{} \sfmi{} \sff{}  \sffi{} &\quad &  \sfp{}  \sfpi{} \sfd{}  \sfdi{} \sfo{}  \sfoi{} \sfs{}  \sfsi{} \\
     \sfd{}  \sfoi{} &\quad &  \sfpi{} \sfd{}  \sfdi{} \sfo{}  \sfoi{} \sfmi{} \sfs{}  \sfsi{} &\quad &  \sfp{}  \sfpi{} \sfd{}  \sfdi{} \sfo{}  \sfoi{} \sfs{}  \sfsi{} \sff{}  \sffi{} \sfeq{} \\
     \sfd{}  \sfo{}  &\quad &  \sfp{}  \sfo{}  &\quad &  \sfp{}  \sfpi{} \sfd{}  \sfdi{} \sfo{}  \sfoi{} \sfmi{} \\
     \sfd{}  \sfdi{} \sfo{}  \sfoi{} &\quad &  \sfp{}  \sfdi{} \sfo{}  &\quad &  \sfp{}  \sfpi{} \sfd{}  \sfdi{} \sfo{}  \sfoi{} \sfmi{} \sff{}  \sffi{} \\
     \sfd{}  \sfdi{} \sfo{}  \sfoi{} \sff{}  \sffi{} &\quad &  \sfp{}  \sfdi{} \sfo{}  \sffi{} &\quad &  \sfp{}  \sfpi{} \sfd{}  \sfdi{} \sfo{}  \sfoi{} \sfmi{} \sfs{}  \sfsi{} \\
     \sfd{}  \sfdi{} \sfo{}  \sfoi{} \sfs{}  \sfsi{} &\quad &  \sfp{}  \sfdi{} \sfo{}  \sfm{}  &\quad &  \sfp{}  \sfpi{} \sfd{}  \sfdi{} \sfo{}  \sfoi{} \sfmi{} \sfs{}  \sfsi{} \sff{}  \sffi{} \sfeq{} \\
     \sfpi{} \sfoi{} &\quad &  \sfp{}  \sfd{}  \sfo{}  &\quad &  \sfp{}  \sfpi{} \sfd{}  \sfdi{} \sfo{}  \sfoi{} \sfm{}  \\
     \sfpi{} \sfdi{} \sfoi{} &\quad &  \sfp{}  \sfd{}  \sfo{}  \sfs{}  &\quad &  \sfp{}  \sfpi{} \sfd{}  \sfdi{} \sfo{}  \sfoi{} \sfm{}  \sff{}  \sffi{} \\
     \sfpi{} \sfdi{} \sfoi{} \sfsi{} &\quad &  \sfp{}  \sfd{}  \sfo{}  \sfm{}  &\quad &  \sfp{}  \sfpi{} \sfd{}  \sfdi{} \sfo{}  \sfoi{} \sfm{}  \sfs{}  \sfsi{} \\
     \sfpi{} \sfdi{} \sfoi{} \sfmi{} &\quad &  \sfp{}  \sfd{}  \sfdi{} \sfo{}  \sfoi{} &\quad &  \sfp{}  \sfpi{} \sfd{}  \sfdi{} \sfo{}  \sfoi{} \sfm{}  \sfs{}  \sfsi{} \sff{}  \sffi{} \sfeq{} \\
     \sfpi{} \sfd{}  \sfoi{} &\quad &  \sfp{}  \sfd{}  \sfdi{} \sfo{}  \sfoi{} \sff{}  \sffi{} &\quad &  \sfp{}  \sfpi{} \sfd{}  \sfdi{} \sfo{}  \sfoi{} \sfm{}  \sfmi{} \\
     \sfpi{} \sfd{}  \sfoi{} \sff{}  &\quad &  \sfp{}  \sfd{}  \sfdi{} \sfo{}  \sfoi{} \sfs{}  \sfsi{} &\quad &  \sfp{}  \sfpi{} \sfd{}  \sfdi{} \sfo{}  \sfoi{} \sfm{}  \sfmi{} \sff{}  \sffi{} \\
     \sfpi{} \sfd{}  \sfoi{} \sfmi{} &\quad &  \sfp{}  \sfd{}  \sfdi{} \sfo{}  \sfoi{} \sfs{}  \sfsi{} \sff{}  \sffi{} \sfeq{} &\quad &  \sfp{}  \sfpi{} \sfd{}  \sfdi{} \sfo{}  \sfoi{} \sfm{}  \sfmi{} \sfs{}  \sfsi{} \\
     \sfpi{} \sfd{}  \sfdi{} \sfo{}  \sfoi{} &\quad &  \sfp{}  \sfd{}  \sfdi{} \sfo{}  \sfoi{} \sfm{}  &\quad &  \\
     \sfpi{} \sfd{}  \sfdi{} \sfo{}  \sfoi{} \sff{}  \sffi{} &\quad &  \sfp{}  \sfd{}  \sfdi{} \sfo{}  \sfoi{} \sfm{}  \sff{}  \sffi{} &\quad &  \\
    \end{tabular}%
\end{table}%

\subsection{Maximal Distributive Subalgebras of CRA}
The procedure to compute the maximal distributive subalgebras of CRA is similar to the procedure for PA, IA, RCC5 and RCC8, but with some differences.

First, we compute $\clb$, and then check by a program if $\Closure{\clb\cup \mathcal{Z}}$ satisfies distributivity for some subset $\mathcal{Z}$ of CRA.

Write $\mathcal{D}$ for the set of relations $R$ in CRA such that $\Closure{\clb\cup\{R\}}$ satisfies distributivity. There are 8 different subalgebras in the set of subalgebras $\{\Closure{\clb\cup\{R\}} : R \in \mathcal{D}\}$. We call these 8 distributive subalgebras the \emph{seed} subalgebras. Among these, only 4 are not contained in any other ones. We call these the \emph{candidate} subalgebras. We then verify the following three facts:
\begin{enumerate}
 \item For any pair of different candidate subalgebras $\mathcal{S}_i$ and $\mathcal{S}_j$, we have $\Closure{\mathcal{S}_i \cup \mathcal{S}_j}$ is not distributive.
 \item For any pair of non-candidate subalgebras $\mathcal{S}_i$ and $\mathcal{S}_j$, we have $\Closure{\mathcal{S}_i \cup \mathcal{S}_j}$ is either a candidate subalgebra or not distributive.
 \item For any pair of subalgebras $\mathcal{S}_i$ and $\mathcal{S}_j$ s.t. $\mathcal{S}_i$ is a candidate subalgebra, $\mathcal{S}_j$ is a non-candidate subalgebra, and $\mathcal{S}_j \not\subseteq \mathcal{S}_i$, we have $\Closure{\mathcal{S}_i \cup \mathcal{S}_j}$ is not distributive.
\end{enumerate}
Based upon the above facts, we show that the four candidate subalgebras are the only maximal distributive subalgebras of CRA. 

To prove the maximality, suppose $\mathcal{S}$ is one of the four candidate subalgebras. Let $R$ be a relation in CRA which is not in $\mathcal{S}$. Then $\Closure{\mathcal{S} \cup \{R\}}$ is not distributive. This is because, by the above facts either $\Closure{\clb \cup \{R\}}$ is not distributive or $\Closure{\clb \cup \{R\}}$ is one of the 8 subalgebras and $\Closure{\clb \cup \{R\} \cup \mathcal{S}}$ is not distributive.

{To prove there are no other maximal distributive subalgebras, suppose $\mathcal{S}'$ is a distributive subalgebra that is not a subset of any of the four candidate subalgebras. $\mathcal{S}'$ must contain at least two relations in $\mathcal{D}$, say $R_1$ and $R_2$. By the above facts, we know the closure of the union of $\Closure{\clb \cup \{R_1\}}$ and $\Closure{\clb \cup \{R_2\}}$ is either not distributive or one of the four maximal distributive subalgebra. If it is the latter case, then $\mathcal{S}'$ would be either not distributive or a superset of one of the four maximal distributive subalgebras. Note that the latter situation cannot happen as it contradicts the maximality of the four maximal distributive subalgebras.}

Interestingly, these four maximal distributive subalgebras of CRA correspond exactly to the Cartesian products of the maximal distributive subalgebras of PA, viz. $\mathcal{C}_{\mathrm{PA}}\otimes\mathcal{C}_{\mathrm{PA}},\mathcal{C}_{\mathrm{PA}}\otimes\mathcal{S}_{\mathrm{PA}},\mathcal{S}_{\mathrm{PA}}\otimes\mathcal{C}_{\mathrm{PA}},\mathcal{S}_{\mathrm{PA}}\otimes\mathcal{S}_{\mathrm{PA}}$, where we interpret in a natural way a CRA relation e.g. $\{NW,N\}$ as $\{<,=\}\otimes\{>\}$.

\subsection{Maximal Distributive Subalgebras of RA}
Unlike the other small calculi we have discussed, RA has a large number (169) of basic relations, resulting a total of $2^{169}$ relations in it. It becomes infeasible to exploit the former brute-force procedure to compute the maximal distributive subalgebras of RA. However, noting that the maximal distributive subalgebras of CRA are exactly the Cartesian products of the two maximal distributive subalgebras of PA, we conjecture that a similar situation happens to RA. This is indeed true.

\begin{theorem}
RA has exactly four maximal distributive subalgebras, which are the Cartesian products of the two maximal distributive subalgebras of IA.
\end{theorem}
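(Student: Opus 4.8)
The plan is to combine the characterisation of distributive subalgebras as Helly subalgebras (Theorem~\ref{thm:dis=helly}) with the product structure of RA, so as to reduce everything to the already-computed maximal distributive subalgebras of IA, namely $\mathcal{C}_{\mathrm{IA}}$ and $\mathcal{S}_{\mathrm{IA}}$. The four candidate subalgebras are $\mathcal{C}_{\mathrm{IA}}\otimes\mathcal{C}_{\mathrm{IA}}$, $\mathcal{C}_{\mathrm{IA}}\otimes\mathcal{S}_{\mathrm{IA}}$, $\mathcal{S}_{\mathrm{IA}}\otimes\mathcal{C}_{\mathrm{IA}}$, and $\mathcal{S}_{\mathrm{IA}}\otimes\mathcal{S}_{\mathrm{IA}}$. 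I would split the argument into two halves: (i) each of the four products is a distributive subalgebra, and (ii) every distributive subalgebra of RA is contained in one of the four. Throughout I rely on the componentwise formulas $(R\otimes S)\cap(R'\otimes S')=(R\cap R')\otimes(S\cap S')$, $(R\otimes S)^{-1}=R^{-1}\otimes S^{-1}$, and $(R\otimes S)\circw(R'\otimes S')=(R\circw R')\otimes(S\circw S')$, which hold because the two coordinates of a rectangle relation behave independently, together with the fact that $\hatbra=\hatbia\otimes\hatbia$ is contained in every subalgebra of RA.

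For half (i), I would first note that a Cartesian product $\mathcal{A}\otimes\mathcal{B}$ of two IA subalgebras is closed under converse, intersection and weak composition by the componentwise formulas, and contains all basic rectangle relations (since $\mathcal{A},\mathcal{B}$ contain all basic IA relations), hence is a subalgebra. To see it is distributive it suffices, by Theorem~\ref{thm:dis=helly}, to verify it is Helly. Given three relations $R_i\otimes S_i$ ($i=1,2,3$) with pairwise nonempty intersections, the intersection formula yields $R_i\cap R_j\neq\varnothing$ and $S_i\cap S_j\neq\varnothing$ for all $i,j$; since $\mathcal{A}$ and $\mathcal{B}$ are themselves Helly (being distributive subalgebras of IA), we get $\bigcap_i R_i\neq\varnothing$ and $\bigcap_i S_i\neq\varnothing$, whence $\bigcap_i(R_i\otimes S_i)=(\bigcap_i R_i)\otimes(\bigcap_i S_i)\neq\varnothing$. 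Thus each of the four products is distributive.

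For half (ii), the crucial structural step is a \emph{rectangularity lemma}: every relation $T$ in a distributive (equivalently Helly) subalgebra $\mathcal{T}$ of RA is rectangular, i.e. $T=\pi_x(T)\otimes\pi_y(T)$, where $\pi_x,\pi_y$ denote the projections onto $\mathcal{B}_{\mathrm{IA}}$. To prove this I would take any $\alpha\in\pi_x(T)$ and $\beta\in\pi_y(T)$ and apply the Helly property to the three relations $T$, the vertical strip $\alpha\otimes\star_{\mathrm{IA}}$, and the horizontal strip $\star_{\mathrm{IA}}\otimes\beta$, where $\star_{\mathrm{IA}}$ is the universal IA relation; since both strips lie in $\hatbra\subseteq\mathcal{T}$ and the three relations intersect pairwise, the triple intersection $T\cap(\alpha\otimes\beta)$ is nonempty, forcing $\alpha\otimes\beta\in T$ (the reverse inclusion being definitional). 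Next I would show the projections $\mathcal{A}=\pi_x(\mathcal{T})$ and $\mathcal{B}=\pi_y(\mathcal{T})$ are distributive subalgebras of IA. The key device is a \emph{strip-expansion} identity: if $R\otimes S\in\mathcal{T}$ then $R\otimes\star_{\mathrm{IA}}=(R\otimes S)\circw(\sfeq\otimes\star_{\mathrm{IA}})\in\mathcal{T}$, using $R\circw\sfeq=R$ and $S\circw\star_{\mathrm{IA}}=\star_{\mathrm{IA}}$ for nonempty $S$ (the latter because IA is integral), noting $\sfeq\otimes\star_{\mathrm{IA}}\in\hatbra$. With $R\otimes\star_{\mathrm{IA}}\in\mathcal{T}$ available for every $R\in\mathcal{A}$, the closure of $\mathcal{A}$ under converse, intersection and weak composition, as well as its Helly property, transfer directly from $\mathcal{T}$ via the componentwise formulas; symmetrically for $\mathcal{B}$. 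Hence $\mathcal{A}\subseteq\mathcal{M}_1$ and $\mathcal{B}\subseteq\mathcal{M}_2$ for some $\mathcal{M}_1,\mathcal{M}_2\in\{\mathcal{C}_{\mathrm{IA}},\mathcal{S}_{\mathrm{IA}}\}$, and by rectangularity $\mathcal{T}\subseteq\mathcal{A}\otimes\mathcal{B}\subseteq\mathcal{M}_1\otimes\mathcal{M}_2$.

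Finally I would read off maximality and the count of four. Since $\pi_x(\mathcal{M}_1\otimes\mathcal{M}_2)=\mathcal{M}_1$ and $\pi_y(\mathcal{M}_1\otimes\mathcal{M}_2)=\mathcal{M}_2$, containment between two products is equivalent to containment of both factors; as $\mathcal{C}_{\mathrm{IA}}$ and $\mathcal{S}_{\mathrm{IA}}$ are distinct, incomparable maximal distributive subalgebras of IA, the four products are pairwise incomparable and distinct, and no distributive subalgebra of RA can properly contain any of them. Together with half (i), this shows the four products are exactly the maximal distributive subalgebras of RA. I expect the main obstacle to be the rectangularity lemma together with the strip-expansion identity: these are what convert the purely combinatorial Helly condition into the rigid product structure, and they hinge on correctly exploiting the strips $\alpha\otimes\star_{\mathrm{IA}}$ and $\sfeq\otimes\star_{\mathrm{IA}}$ and the integrality fact $S\circw\star_{\mathrm{IA}}=\star_{\mathrm{IA}}$; once these are in place, the remaining transfers of closure and of the Helly property are routine applications of the componentwise formulas.
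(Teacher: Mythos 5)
Your proposal is correct and follows essentially the same route as the paper: both reduce distributivity to the Helly condition via Theorem~\ref{thm:dis=helly}, establish rectangularity of every relation in a distributive subalgebra using the strip relations $\{\alpha\}\otimes\star$ and $\star\otimes\{\beta\}$, recover the coordinate projections as distributive subalgebras of IA by composing with $\{\sfeq\}\otimes\star$, and conclude containment in some $\mathcal{M}_1\otimes\mathcal{M}_2$ with $\mathcal{M}_1,\mathcal{M}_2\in\{\mathcal{C}_{\mathrm{IA}},\mathcal{S}_{\mathrm{IA}}\}$. The only (harmless) structural difference is that you derive maximality as a formal corollary of this containment result together with the incomparability of the four products, whereas the paper proves maximality directly by a two-case Helly-violation argument --- the same strip idea used contrapositively.
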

\begin{proof}
For convenience, we write $\mathcal{D}_1$ and $\mathcal{D}_2$ for the maximal distributive subalgebras $\mathcal{C}_{\mathrm{IA}}$ and $\mathcal{S}_{\mathrm{IA}}$. 
It is straightforward to show that their Cartesian products $\mathcal{D}_i\otimes \mathcal{D}_j$ ($1\leq i,j\leq 2$) are all distributive subalgebras of RA. 

In order to show the maximality of $\mathcal{D}_i\otimes \mathcal{D}_j$, suppose $R \not\in \mathcal{D}_i \otimes \mathcal{D}_j$. We show that the subalgebra $\widehat{\{R\} \cup \mathcal{D}_i \otimes \mathcal{D}_j}$ is not distributive. Let $R_x = \{\alpha \in \mathcal{B}_{\mathrm{IA}} \;|\; \exists \beta \in \mathcal{B}_{\mathrm{IA}} \mbox{ s.t. } (\alpha,\beta) \in R\}$ and define $R_y$ similarly. Note that  $R$ is always contained in $R_x \otimes R_y$. There are two cases. 

Case 1. $R \subsetneq R_x \otimes R_y$. Then there exist $\alpha_0 \in R_x$ and $\beta_0 \in R_y$ s.t. $\alpha_0\otimes\beta_0 \not\in R$. Let $S = \{\alpha_0\}\otimes\star$ and $T = \star\otimes \{\beta_0\}$. Note that $\clb_{\mathrm{RA}}$ is strictly contained in $\mathcal{D}_i \otimes \mathcal{D}_j$. Thus $S,T \in \Closure{\{R\} \cup \mathcal{D}_i \otimes \mathcal{D}_j}$. It is easy to see that $R \cap S \ne \varnothing$, $R \cap T \ne \varnothing$, and $S \cap T \ne \varnothing$, but $R \cap S \cap T = \varnothing$. By Theorem~\ref{thm:dis=helly}, this implies that $\Closure{\{R\} \cup \mathcal{D}_i \otimes \mathcal{D}_j}$ is not distributive.

Case 2. $R = R_x \otimes R_y$. Then we have either $R_x \not\in \mathcal{D}_i$ or $R_y \not\in \mathcal{D}_j$. Take $R_x \not\in \mathcal{D}_i$ as an example. Then $\Closure{\{R_x\} \cup \mathcal{D}_i}$ is not distributive. This implies that there exist $R_0, S_0, T_0 \in \Closure{\{R_x\} \cup \mathcal{D}_i}$ which do not satisfy Helly's condition \eqref{eq:helly}. Note that $R_0\otimes\star$, $S_0\otimes\star$, and $T_0\otimes\star$ are all in $\Closure{\{R\} \cup \mathcal{D}_i \otimes \mathcal{D}_j}$. 
However, the three relations $R_0\otimes\star$, $S_0\otimes\star$, and $T_0\otimes\star$ do not satisfy \eqref{eq:helly}, which means that $\Closure{\{R\} \cup \mathcal{D}_i \otimes \mathcal{D}_j}$ is not distributive.

The above proves the maximality of $\mathcal{D}_i \otimes \mathcal{D}_j$. To show the uniqueness, suppose $\mathcal{S}$ is a distributive subalgebra. We show $\mathcal{S}$ is a subset of $\mathcal{D}_i \otimes \mathcal{D}_j$ for some $i,j$.

First, we show for every $R \in \mathcal{S}$ we have $R = R_x \otimes R_y$. Suppose not. Then there exist $\alpha \in R_x$ and $\beta \in R_y$ s.t. $\alpha\otimes \beta \not\in R$. Similar to the proof of the maximality, we know both $\{\alpha\}\otimes\star$ and  $\star\otimes\{\beta\}$ are in $\Closure{\mathcal{B}}$ and, hence, in $\mathcal{S}$. The three relations $R, \{\alpha\}\otimes\star,\star\otimes\{\beta\}$, however, do not satisfy Helly's condition \eqref{eq:helly}.

Next, we show that $\mathcal{S}$ is a subset of $\mathcal{D}_i \otimes \mathcal{D}_j$ for some $i,j$. Write $\mathcal{S}_x=\{R_x: R\in\mathcal{S}\}$ and $\mathcal{S}_y=\{R_y:R\in\mathcal{S}\}$. We assert that $\mathcal{S}_x$ and $\mathcal{S}_y$ are both distributive subalgebras of IA. We first note that if $R=R_x\otimes R_y\in\mathcal{S}$, then both $R_x\otimes\star$ and $\star\otimes R_y$ are in $\mathcal{S}$. This is because, for instance, $\{\sfeq\}\otimes\star$ is a relation in $\clb_{\mathrm{RA}}\subseteq \mathcal{S}$ and $(R_x\otimes R_y)\circw(\{\sfeq\}\otimes\star)=R_x\otimes\star$. It is easy to check that $\{R_x\otimes\star: R_x\otimes R_y\in \mathcal{S}\}$ is a distributive subalgebra which is contained in $\mathcal{S}$. Now, it is clear that $\mathcal{S}_x$ is a distributive subalgebra of IA and, hence, contained in either $\mathcal{D}_1$ or $\mathcal{D}_2$. The same conclusion applies to $\mathcal{S}_y$. Therefore, $\mathcal{S}$ is a subset of $\mathcal{D}_i \otimes \mathcal{D}_j$ for some $i,j$. \qed
\end{proof}

The above proof also applies to CRA.

\section{Partial Path Consistency and Variable Elimination}
In this section, we present two nice properties of distributive subalgebras, which will play an important role in reasoning with large sparse constraint networks. 

\subsection{Variable Elimination}

In \cite{ZhangM09}, Zhang and Marisetti proposed a novel variable elimination method for solving (classical and finite) connected row convex (CRC) constraints \cite{deville}. The idea is to eliminate the variables one by one until a trivial problem is reached. Although very simple, the algorithm is able to make use of the sparsity of the problem instances and performs very well. One key property of CRC constraints is that any strong path consistent CRC constraint network is globally consistent. Recall that a similar property has been identified  in our  Theorem~\ref{thm:pc-wgc} for constraint networks over a distributive subalgebra. The following theorem shows that the same variable elimination method also applies to constraint networks over a distributive subalgebra, 

\begin{theorem}\label{thm:ve}
Let $\qcm$ be a qualitative calculus that satisfies \eqref{eq:ra} and \eqref{eq:pc->c}. Suppose $\network=\{v_i R_{ij} v_j\;|\; 1\leq i,j\leq n\}$ is a network over a distributive subalgebra $\mathcal{S}$ of $\qcm$ and $V=\{v_1,...,v_n\}$. If $R_{ij}\subseteq R_{in}\circw R_{nj}$ for every $1\leq i,j< n$, then $\network_{{-n}}$ is consistent only if $\network$ is consistent, where $\network_{{-n}}$ is the restriction of $\network$ to $\{v_1,...,v_{n-1}\}$.
\end{theorem}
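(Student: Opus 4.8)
The plan is to reduce the claim to the nonemptiness of the algebraic closure of the full network and then invoke Theorem~\ref{thm:pc-wgc}. Concretely, since $\network_{-n}$ is assumed consistent, I would first replace its relations by those of its a-closure: this is equivalent to $\network_{-n}$ (same solutions), is nonempty, lies in $\mathcal{S}$, and is now path consistent, while the hypothesis $R_{ij}\subseteq R_{in}\circw R_{nj}$ is preserved because the base relations only shrink (and the apex edges $R_{in}$ are untouched). Since the resulting full network refines the original, it suffices to prove that this refined network is consistent; and by Theorem~\ref{thm:pc-wgc} any nonempty path consistent network over $\mathcal{S}$ is consistent. Thus the entire problem reduces to showing that running the path consistency algorithm on the full network never produces an empty relation.

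To control the algorithm I would track the edges to the apex $v_n$, writing $Q_i$ for the current relation between $v_i$ and $v_n$ (initially $Q_i=R_{in}$), and maintain the invariant $R_{ij}\subseteq Q_i\circw Q_j^{-1}$ for all $i,j<n$, where the $R_{ij}$ are the (now path consistent) base relations. This holds at the start by hypothesis. Its first consequence is that the base is never tightened: the only updates touching a base edge $R_{ij}$ use the apex as intermediate, $R_{ij}\leftarrow R_{ij}\cap(Q_i\circw Q_j^{-1})$, a no-op by the invariant, while updates among base variables are no-ops since the base is path consistent. Hence the base stays fixed and path consistent, and the algorithm only shrinks apex edges via $Q_k\leftarrow Q_k\cap(R_{ki}\circw Q_i)$.

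The heart of the proof, and the step where distributivity is indispensable, is that the invariant is preserved under such an apex update. Using the distributive law to split $\big(Q_k\cap(R_{ki}\circw Q_i)\big)\circw Q_j^{-1}$ into $(Q_k\circw Q_j^{-1})\cap\big((R_{ki}\circw Q_i)\circw Q_j^{-1}\big)$, the first factor contains $R_{kj}$ by the invariant, while associativity together with path consistency of the base gives $R_{kj}\subseteq R_{ki}\circw R_{ij}\subseteq R_{ki}\circw(Q_i\circw Q_j^{-1})$, so the second factor contains $R_{kj}$ as well; hence the updated $Q_k$ still satisfies the invariant. This is exactly where Theorem~\ref{thm:dis=helly} (distributive $=$ Helly) does the work, and it is the main obstacle: without distributivity the composition does not distribute over the intersection and the invariant can be lost.

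It remains to check that the apex edges never become empty, for which I would use the cycle law \eqref{eq:cycle}. For the update $Q_k\cap(R_{ki}\circw Q_i)$, the cycle law rewrites nonemptiness of this intersection as nonemptiness of $(Q_k\circw Q_i^{-1})\cap R_{ki}$; but the invariant gives $R_{ki}\subseteq Q_k\circw Q_i^{-1}$, so this intersection equals $R_{ki}$, which is nonempty because the base network is consistent. (Taking $i=j$ and using that $R_{ii}$ contains the identity likewise shows each $Q_i$ stays nonempty, and in particular that all $R_{in}$ are nonempty to begin with.) Consequently the a-closure of the full network is nonempty and path consistent over $\mathcal{S}$; by Theorem~\ref{thm:pc-wgc} it is consistent, and therefore so is $\network$.
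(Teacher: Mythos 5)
Your proof is correct, but it takes a genuinely different route from the paper's. The paper argues scenario-by-scenario: it fixes a consistent scenario $\{\delta_{ij}\}$ of $\network_{-n}$, extends it in one shot by placing $\widehat{T}_i=\bigcap_{j=1}^{n-1} R_{nj}\circw\delta_{ji}$ on the apex edges, and verifies (i) the triangle condition $\widehat{T}_j\subseteq\widehat{T}_i\circw\delta_{ij}$ via distributivity, and (ii) $\widehat{T}_i\neq\varnothing$ by reducing, through the cycle law and the hypothesis $\delta_{j'j}\subseteq R_{j'j}\subseteq R_{j'n}\circw R_{nj}$, to \emph{pairwise} nonemptiness, and then invoking the Helly property of $\mathcal{S}$ to pass to the full $(n-1)$-fold intersection. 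You instead a-close the base and run an invariant argument on the path consistency algorithm itself, maintaining $R_{ij}\subseteq Q_i\circw Q_j^{-1}$, showing the base is never re-tightened and the apex edges never empty; since only binary intersections ever occur, you never actually need Helly --- so your parenthetical crediting Theorem~\ref{thm:dis=helly} is a mislabel: the invariant-preservation step uses the distributivity law of Definition~\ref{dfn:distributive} directly (legitimately, since you first secure nonemptiness of the intersection via the cycle law, as distributivity only applies to nonempty intersections). What each approach buys: yours has extra algorithmic content --- it shows that enforcing path consistency on the full network leaves the a-closed base fixed, essentially a dynamic form of the partial path consistency theorem in Section~5.2, and it explains why eliminating $v_n$ is ``free'' in that nothing propagates back into the base; the paper's version is shorter and establishes the slightly stronger fact that \emph{every} consistent scenario of $\network_{-n}$ extends over the apex, which is exactly the shape needed by the weak-global-consistency machinery of Theorem~\ref{thm:pc-wgc}. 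Both share the same pillars (cycle law, distributivity) and the same endgame: a nonempty path consistent network over $\mathcal{S}$ refining $\network$ is consistent. One small gloss on your side: the diagonal updates $R_{nn}\leftarrow R_{nn}\cap(Q_i^{-1}\circw Q_i)$ also deserve a one-line check (the cycle law gives nonemptiness because $Q_i\circw R_{nn}\supseteq Q_i\neq\varnothing$), though omitting diagonals is conventional and the paper's proof does the same.
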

\begin{proof}
Suppose $\{\delta_{ij}:1\leq i,j<n\}$ is a consistent scenario of $\network_{-n}$. First, write $T_i$ for $R_{n,i}$ and let $\widehat{T}_i=\bigcap_{j=1}^{n-1} T_j\circw\delta_{ji}$. We only need to show $\widehat{T}_j\subseteq \widehat{T}_i\circw\delta_{ij}$. Note
\begin{align*}
\widehat{T}_i\circw\delta_{ij} = (\bigcap_{j=1}^{n-1} T_{j'}\circw \delta_{j'i})\circw \delta_{ij} = \bigcap_{j'=1}^{n-1} (T_{j'}\circw \delta_{j'i}\circw \delta_{ji}) \supseteq \bigcap_{j'=1}^{n-1} T_{j'}\circw\delta_{j'j} =\widehat{T}_j.
\end{align*}

Second, we show $\widehat{T}_i$ is not empty. To this end, by Helly's condition \eqref{eq:helly}, we only need to show 
$T_j\circw \delta_{ji}\cap T_{j'}\circw \delta_{j'i}\not=\varnothing$ for any $j\not=j'$. Using the cycle law twice, we have 
\begin{align*}
T_j\circw \delta_{ji}\cap T_{j'}\circw \delta_{j'i} \not=\varnothing \quad \mbox{iff} \quad & T_{j'}\circw \delta_{j'i}\circw \delta_{ij}\cap T_j\not=\varnothing\\
\quad \mbox{iff} \quad  & {T_{j'}}^{-1}\circw T_j \cap \delta_{j'i}\circw \delta_{ij}\not=\varnothing\\
\quad \mbox{iff} \quad & R_{j'n}\circw R_{nj}\cap \delta_{j'i}\circw\delta_{ij}\not=\varnothing.
\end{align*}
Because $\delta_{j'j}\subseteq R_{j'n}\circw R_{nj}$ and $\delta_{j'j}\subseteq \delta_{j'i}\circw \delta_{ij}$, we have $R_{j'n}\circw R_{nj}\cap \delta_{j'i}\circw\delta_{ij}\not=\varnothing$, hence  
$T_j\circw \delta_{ji}\cap T_{j'}\circw \delta_{j'i}\not=\varnothing$.
\qed
\end{proof}

By the previous theorem, we can directly devise an efficient variable elimination algorithm for constraint networks over a distributive subalgebra. At each step, we choose the node for deleting which has the smallest degree. In particular, we can simply remove all nodes with degree 1 from the constraint network without affecting its consistency. This is especially useful for efficient reasoning with large sparse constraint networks.  

\subsection{Partial Path Consistency}

Another efficient method for solving sparse constraint networks is the partial path consistency (PPC) algorithm proposed by Bliek and Sam-Haroud \cite{bliek1999path}. The idea is to enforce path consistency (PC) on sparse graphs by triangulating instead of completing them. The authors demonstrated that, as far as CRC constraints are concerned, the pruning capacity of PC on triangulated graphs and their completion are identical on the common edges. Recently, PPC has also been extended to qualitative spatial and temporal constraint solving \cite{ChmeissC11,SioutisK12}, where the authors proved that any PPC constraint network over a maximal tractable subclass of IA or RCC8 is always consistent. However, for constraint networks over these subclasses, the pruning capacity of PC on triangulated graphs and their completion may be not identical on the common edges. In this section, we show that the answer is affirmative for constraint networks over distributive subalgebras.

We first recall several basic notions related to PPC introduced in \cite{bliek1999path}.

An undirected graph $G=(V,E)$ is \emph{triangulated} or \emph{chordal} if every cycle of length greater than 3 has a chord, i.e. an edge connecting two non-consecutive vertices of the cycle. For each $v\in V$, the adjacency set $Adj(v)$, is defined as $\{w\in V: \{v,w\}\in E\}$. A vertex $v$ is \emph{simplicial} if $Adj(v)$ is complete. Every chordal graph has a simplicial vertex. Moreover, after removing a simplicial vertex and its incident edges from the graph, a chordal graph remains chordal. The order in which simplicial vertices are successively removed is called a \emph{perfect elimination order}. 

\begin{lemma}[\cite{bliek1999path}]
	If $G=(V,E)$ is an incomplete chordal graph, then one can add a missing edge $(u,w)$ with $u,w\in V$ such that 
	\begin{itemize}
		\item the graph $G' = (V,E\cup \{\{u,w\}\})$ is chordal graph; and
		\item the graph induced by $X=\{x|\{u,x\},\{x,w\} \in E\}$ is complete. 
	\end{itemize}
\end{lemma}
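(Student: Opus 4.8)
The plan is to read off a suitable edge from a clique tree of $G$, which turns both requirements into short consequences of the junction-tree property. If $G$ is disconnected the claim is trivial: pick $u,w$ in different connected components, so that $X=\varnothing$ is vacuously complete and the new edge $\{u,w\}$ lies on no cycle, whence $G'$ is again chordal. So assume $G$ is connected. Then $G$ has a clique tree $T$, i.e. a tree whose nodes are the maximal cliques of $G$ and in which, for every vertex $v$, the maximal cliques containing $v$ induce a connected subtree $T_v$. Since $G$ is incomplete it has at least two maximal cliques, so $T$ has an edge; fix one joining cliques $C$ and $C'$ and put $S=C\cap C'$. Maximality forbids $C\subseteq C'$ or $C'\subseteq C$, so $C\setminus S$ and $C'\setminus S$ are nonempty; choose $u\in C\setminus S$ and $w\in C'\setminus S$. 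These are my candidates.

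First I would extract two structural facts. Removing the edge $CC'$ partitions the nodes of $T$ into a $C$-side and a $C'$-side; for any $x\notin S$ the connected subtree $T_x$ meets only one side, since meeting both would force $C,C'\in T_x$ and hence $x\in C\cap C'=S$. As $u\in C$ and $w\in C'$, the subtree $T_u$ lies on the $C$-side and $T_w$ on the $C'$-side. Consequently $u,w$ are non-adjacent (an edge $uw$ would put both in a common maximal clique, a node shared by $T_u$ and $T_w$), and, more importantly, $S$ separates $u$ from $w$ in $G$: a path from $u$ to $w$ avoiding $S$ would contain a consecutive pair with endpoints on opposite sides, i.e. a clique straddling the edge $CC'$, which is impossible. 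The same dichotomy pins down the common neighbourhood exactly as $X=N(u)\cap N(w)=S$: every $s\in S$ lies in the cliques $C\ni u$ and $C'\ni w$, so is adjacent to both, while any common neighbour $x$ has $T_x$ meeting both sides and hence $x\in S$. Since $S\subseteq C$ is a clique, $X$ is complete, which is the second bullet.

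The remaining and main task is to show $G'=(V,E\cup\{\{u,w\}\})$ is chordal. Suppose instead that $G'$ contains a chordless cycle $\Gamma$ of length at least $4$. Because $G$ is chordal, $\Gamma$ must use the new edge $\{u,w\}$; deleting that edge from $\Gamma$ leaves a path $P$ in $G$ from $u$ to $w$ of length at least $3$, so $P$ has at least two internal vertices. Since $S$ separates $u$ from $w$, the path $P$ contains a vertex $s\in S$, which is necessarily internal. From $s\in S\subseteq C$ we get $su\in E$ and from $s\in S\subseteq C'$ we get $sw\in E$. The cycle-edges at $u$ are $\{u,w\}$ and the first edge of $P$, those at $w$ are $\{u,w\}$ and the last edge of $P$; as $P$ has two or more internal vertices the inner endpoints of these two edges differ, so $s$ coincides with at most one of them. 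Hence at least one of $su,sw$ is a chord of $\Gamma$, contradicting chordlessness. Therefore $G'$ is chordal, completing the argument.

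The step I expect to be the real obstacle is exactly this chordality-preservation argument, and within it the bookkeeping for short cycles: when $\Gamma$ has length $4$ one must be sure the vertex $s\in S$ is non-consecutive with $u$ or with $w$ on $\Gamma$, so that $su$ or $sw$ is a genuine chord rather than a cycle edge; the ``at least two internal vertices'' count is precisely what guarantees this. Everything else rests on standard clique-tree facts, which could equivalently be stated through minimal separators---$S=C\cap C'$ is a minimal separator of $u$ and $w$, and minimal separators of chordal graphs are cliques---yielding a clique-tree-free variant of the same proof.
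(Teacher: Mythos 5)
Your proof is correct, and it is worth noting at the outset that the paper itself offers no proof of this lemma: it is imported verbatim from Bliek and Sam-Haroud \cite{bliek1999path}, so there is no in-paper argument to match against. Your route, however, differs genuinely from the original one. Bliek and Sam-Haroud argue via perfect elimination orderings --- essentially the scheme the paper's subsequent theorem re-enacts: fix a perfect elimination order $v_1,\ldots,v_n$, take the largest $i$ for which the graph induced on the last $i$ vertices is complete, and join the simplicial-in-suffix vertex $v_{n-i}$ to a non-neighbour $v_j$ among the later vertices; completeness of the common neighbourhood then falls out of $F_i$ being a clique, and chordality of $G'$ follows because the same ordering remains a perfect elimination order. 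You instead read the edge off a clique tree: picking $u\in C\setminus S$, $w\in C'\setminus S$ across a clique-tree edge with separator $S=C\cap C'$, you get non-adjacency of $u,w$, the identity $N(u)\cap N(w)=S$ (hence a complete common neighbourhood, since $S\subseteq C$), and chordality of $G'$ by the separator argument: any chordless cycle through the new edge yields a $u$--$w$ path in $G$ that must cross $S$, producing a chord $su$ or $sw$; your count of at least two internal vertices correctly disposes of the length-$4$ edge case, and the disconnected case is handled cleanly. Each approach buys something: the PEO proof is constructive in exactly the form the PPC algorithm consumes (it tells you in which order to add fill edges, which is how the paper's theorem uses it), whereas your clique-tree/minimal-separator proof gives a sharper structural statement --- it identifies the common neighbourhood exactly as a minimal separator, not merely as a subset of a clique --- and generalises more transparently. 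I see no gap.
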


For a constraint network $\network=\{v_i R_{ij} v_j: 1\leq i,j\leq n\}$ over $V=\{v_1,...,v_n\}$, the \emph{constraint graph} of $\network$ is the undirected graph $G(\network)=(V,E(\network))$, for which we have $\{v_i,v_j\}\in E(\network)$ iff $R_{ij}\not=\star$. Given a constraint network $\network$ and a graph $G=(V,E)$, we say $\network$ is \emph{partial path consistent w.r.t. $G$} iff for any $1\leq i,j,k\leq n$ with $\{v_i,v_j\}, \{v_j,v_k\}, \{v_i,v_k\}\in E$ we have $R_{ik}\subseteq R_{ij}\circw R_{jk}$ \cite{ChmeissC11}. 

The following result was first proved for RCC8 in \cite{SioutisCL15}. The proof given there is also applicable to other calculi. We here give a slightly different proof which does not use the weakly global consistency result. 

\begin{theorem}
Let $\qcm$ be a qualitative calculus that satisfies \eqref{eq:ra} and \eqref{eq:pc->c}. Suppose $\network=\{v_i R_{ij} v_j\;|\; 1\leq i,j\leq n\}$ is a network over a distributive subalgebra $\mathcal{S}$ of $\qcm$ and $V=\{v_1,...,v_n\}$. Assume in addition that $G=(V,E)$ is a chordal graph such that $E(\network)\subseteq E$. Then enforcing partial path consistency on $G$ is equivalent to enforcing path consistency on the completion of $G$, in the sense that the relations computed for the constraints in $G$ are identical.
\end{theorem}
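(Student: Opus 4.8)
The plan is to prove the two inclusions separately. One direction is immediate: since the completion of $G$ contains every triangle of $G$, enforcing path consistency on the completion prunes at least as much as enforcing partial path consistency on $G$, so on every edge $\{v_i,v_j\}\in E$ the relation produced by path consistency is contained in the one produced by partial path consistency. The substance of the theorem is the reverse inclusion, and for this I would start from the partial path consistent network $\network^{ppc}$ obtained by closing $\network$ on $G$ and show that it can be extended to a \emph{full} path consistent network $\network'$ on the completion \emph{without ever altering a relation already attached to an edge of $G$}. Granting this, $\network'$ is a path consistent refinement of $\network$ (it refines $\network$ on the edges of $G$ because partial path consistency only prunes, and on the missing edges because there the original relation is $\star$); hence $\network'$ is contained in the a-closure $\network^{pc}$, the largest path consistent refinement of $\network$. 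Since $\network'$ agrees with $\network^{ppc}$ on the edges of $G$, this gives $\network^{ppc}\subseteq\network^{pc}$ there, and combined with the trivial inclusion we obtain equality.

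The construction of $\network'$ proceeds by induction along a reverse perfect elimination order, adding the missing edges one at a time. At each step I would invoke the edge-addition lemma stated above to choose a missing edge $\{u,w\}$ so that the enlarged graph $G'$ stays chordal and the set $X=\{x:\{u,x\},\{x,w\}\in E\}$ of common neighbours induces a complete subgraph, and I would set $R_{uw}:=\bigcap_{x\in X}(R_{ux}\circw R_{xw})$. Two things must be checked: that $R_{uw}$ is non-empty, and that the enlarged network is partial path consistent with respect to $G'$ while leaving all previously assigned relations untouched. The only new triangles of $G'$ are the $\{u,w,x\}$ with $x\in X$, so these are the only conditions to verify.

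For non-emptiness I would use that $\mathcal{S}$ is Helly (Theorem~\ref{thm:dis=helly}) in its $n$-ary form \eqref{eq:helly+}, which reduces the claim to pairwise non-emptiness $(R_{ux'}\circw R_{x'w})\cap(R_{ux''}\circw R_{x''w})\neq\varnothing$ for $x',x''\in X$. This I would obtain from the cycle law \eqref{eq:cycle}: rewriting the intersection, it suffices that $R_{x'w}\cap(R_{x'u}\circw R_{ux''}\circw R_{x''w})\neq\varnothing$ (where $R_{x'u}=R_{ux'}^{-1}$), and here partial path consistency on the triangles $\{x',x'',w\}$ and $\{x',u,x''\}$ (both present because $X$ is complete) yields $R_{x'w}\subseteq R_{x'x''}\circw R_{x''w}\subseteq R_{x'u}\circw R_{ux''}\circw R_{x''w}$, so the left-hand side contains the non-empty relation $R_{x'w}$.

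For the partial path consistency of $\network'$ with respect to $G'$, the containment $R_{uw}\subseteq R_{ux}\circw R_{xw}$ is immediate from the definition of $R_{uw}$. The remaining two conditions, $R_{ux}\subseteq R_{uw}\circw R_{wx}$ and $R_{xw}\subseteq R_{xu}\circw R_{uw}$, are where distributivity enters: since $R_{uw}\neq\varnothing$, distributivity gives $R_{uw}\circw R_{wx}=\bigcap_{x'\in X}(R_{ux'}\circw R_{x'w}\circw R_{wx})$, and for each $x'$ partial path consistency on $\{u,x',x\}$ and $\{x',x,w\}$ (again available because $X$ is complete) gives $R_{ux}\subseteq R_{ux'}\circw R_{x'x}\subseteq R_{ux'}\circw R_{x'w}\circw R_{wx}$; the symmetric condition is analogous. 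In particular these conditions are met \emph{by the old relations $R_{ux},R_{xw}$ as they stand}, so adding $\{u,w\}$ forces no re-pruning and the induction hypothesis is preserved. When the graph becomes complete, partial path consistency coincides with path consistency, so $\network'$ is the desired full path consistent network. I expect the main obstacle to be the non-emptiness argument: getting the cycle-law bookkeeping right and recognising that completeness of $X$ is precisely what supplies the triangles needed both there and in the distributivity step; handling the degenerate case where $\network^{ppc}$ already contains an empty relation (in which case the completion is inconsistent as well) is a minor addendum.
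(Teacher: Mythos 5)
Your proposal is correct and takes essentially the same route as the paper's own proof (which is itself modelled on Bliek and Sam-Haroud's CRC argument): add the missing edges one at a time while preserving chordality, label each new edge with the intersection of the compositions through the common neighbours, establish non-emptiness via the cycle law plus the Helly property (Theorem~\ref{thm:dis=helly}), and verify the three new path-consistency conditions using distributivity. The only differences are cosmetic — you invoke the edge-addition lemma and intersect over all common neighbours $X$ where the paper works along a perfect elimination order with the sets $F_i$, and you spell out the final step (maximality of the a-closure plus $\star$ on missing edges) that the paper leaves implicit.
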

\begin{proof}
The proof is similar to the one given for CRC constraints  \cite[Theorem~3]{bliek1999path}. Suppose we have a chordal graph $G = (V,E)$ such that $G(\network)\subseteq G$ and $\network$ is PPC w.r.t. $G$. We will add to $G$ the missing edges one by one until the graph is complete. To prove the theorem, we show that the relations of the constraints can be computed from the existing ones so that each intermediate graph, including the complete graph, is path consistent.
    
    In the following we assume the order $v_1,\ldots,v_n$ is a perfect elimination order of chordal graph $G$. Denote $S_i = \{v_{n-i+1}, \ldots, v_n\}$, $G_i=G(S_i)$ (the induced subgraph of $G$ by $S_i$), and $F_i = \{v_k \in N(v_{n-i}) : k>n-i\}$, where $N(v_{n-i}) = \{v_j : \{v_j,v_{n-i}\} \in E\}$.

	We add the missing edges one by one to $G$ in the following manner:
	\begin{enumerate}
		\item choose the largest $i$ such that $G_i$ is complete; 
		\item choose vertices $v_{n-i},v_j$ in $G$;
		\item label the edge $\{v_{n-i},v_j\}$ (and resp. its reverse) with 
		$$R_{n-i,j} = \bigcap_{v_k \in F_i} R_{n-i,k} \circw R_{k,j}.$$
	\end{enumerate}
	After adding one edge, we prove $G'$, the resulting graph, is still path consistent.
	
	First, we show the added label is non-empty. To show this, by Theorem~\ref{thm:dis=helly}, we need only show $R_{n-i,k} \circw R_{k,j} \cap R_{n-i,k'} \circw R_{k',j} \ne \varnothing$ for any $v_k \ne v_{k'} \in F_i$. Such a pairwise intersection is not empty because, by the cycle law of relation algebra, we have 
    $$R_{n-i,k} \circw R_{k,j} \cap R_{n-i,k'} \circw R_{k',j} \ne \varnothing\ \quad \mbox{iff} \quad R_{k,n-i} \circw R_{n-i,k'} \cap R_{k,j} \circw R_{j,k'} \ne \varnothing.$$ 
   Since $G(F_i \cup \{v_{n-i}\})$ and $G_i$ are complete and path consistent, we have $R_{k,k'} \subseteq R_{k,n-i} \circw R_{n-i,k'}$ and $R_{k,k'} \subseteq R_{k,j} \circw R_{j,k'}$. This shows $R_{k,n-i} \circw R_{n-i,k'} \cap R_{k,j} \circw R_{j,k'} \ne \varnothing$ and, hence, $R_{n-i,k} \circw R_{k,j} \cap R_{n-i,k'} \circw R_{k',j} \ne \varnothing$.

	We then need to show the constraint network is path consistent for the three paths  $\langle n-i,j,k'\rangle$, $\langle n-i,k',j\rangle$, and $\langle k',n-i,j\rangle$. 
	
	For $\langle n-i,j,k'\rangle$, note that, for any $k \in F_i$, we have $R_{n-i,k'} \subseteq R_{n-i,k} \circw R_{k,k'} \subseteq R_{n-i,k} \circw R_{k,j} \circw R_{j,k'}$.  Therefore, we have $R_{n-i,k'} \subseteq \bigcap_{k \in F_i} R_{n-i,k} \circw R_{k,j} \circw R_{j,k'}$. By distributivity, we know $R_{n-i,k'} \subseteq (\bigcap_{k\in F_i} R_{n-i,k} \circw R_{k,j} ) \circw R_{j,k'}=R_{n-i,j} \circw R_{k,j}$. 
	
	For $\langle n-i,k',j\rangle$, by the construction of $R_{n-i,j}$, we have $R_{n-i,j} \subseteq R_{n-i,k'} \circw R_{k',j}$.
	
	For $\langle k',n-i,j\rangle$, we need to show $R_{k',j} \subseteq R_{k',n-i} \circw R_{n-i,j}$. Note $R_{n-i,j} = \bigcap_{v_k \in F_i} R_{n-i,k} \circw R_{k,j}$. By distributivity, it is sufficient to show, for each $k\in F_i$, $R_{k',j} \subseteq R_{k',n-i} \circw R_{n-i,k} \circw R_{k,j}$. Because $G(F_i \cup \{v_{n-i}\})$ is complete and PC, $R_{k',k} \subseteq R_{k',n-i} \circw R_{n-i,k}$. Moreover, because $G(F_i \cup \{v_j\})$ is complete and PC by construction and induction, $R_{k',j} \subseteq R_{k',k} \circw R_{k,j} \subseteq R_{k',n-i} \circw R_{n-i,k} \circw R_{k,j}$.
	
	Thus, after adding a missing edge, the resulting graph remains path-consistent. At last we will get the complete graph, which is equivalent to the completion of $G$. Note that the label of every edge in $G$ is not changed. This finishes the proof. \qed
\end{proof}

\section{Further Discussion}
In this section we discuss the relation of distributive subalgebras with conceptual neighbourhood graphs (CNGs) \cite{Freksa92a} and star distributivity \cite{Montanari} of classical CSPs. 

\subsection{Distributive Subalgebras and Conceptual Neighbourhood Graph}
As we have seen, the classes of convex IA and RCC8 relations are maximal distributive subalgebras of IA and RCC8 respectively. For IA, Ligozat \cite{Ligozat94} characterises the convex relations by using the CNG of IA \cite{Freksa92a} (shown in Table~\ref{tab:int} (ii)). An IA relation is convex if it is an ``interval'' $[\alpha,\beta]$ containing all the relations between its two endpoint relations $\alpha,\beta$ in the CNG. The subclass of convex IA relations is exactly the maximal distributive subalgebra $\mathcal{C}_{\mathrm{IA}}$.

Similar idea applies to PA and RCC5 directly. For PA, the CNG is shown in the left of Figure~\ref{fig:CNG}. From the CNG of PA, we observe the ``convex'' relations correspond to relations in $\mathcal{C}_{\mathrm{PA}} = \{<,=,>,\le,\ge\}$, one of the maximal distributive subalgebras of PA. For RCC5, the CNG is shown in the middle of Figure~\ref{fig:CNG}. The subclass of convex RCC5 relations is precisely the maximal distributive subalgebra  $\mathcal{D}^5_{14}$ specified in \cite{AIJPrime}.

\begin{figure}
\begin{tabular}{ccccc}
\begin{subfigure}[b]{0.18\textwidth}
 \includegraphics[width=\textwidth]{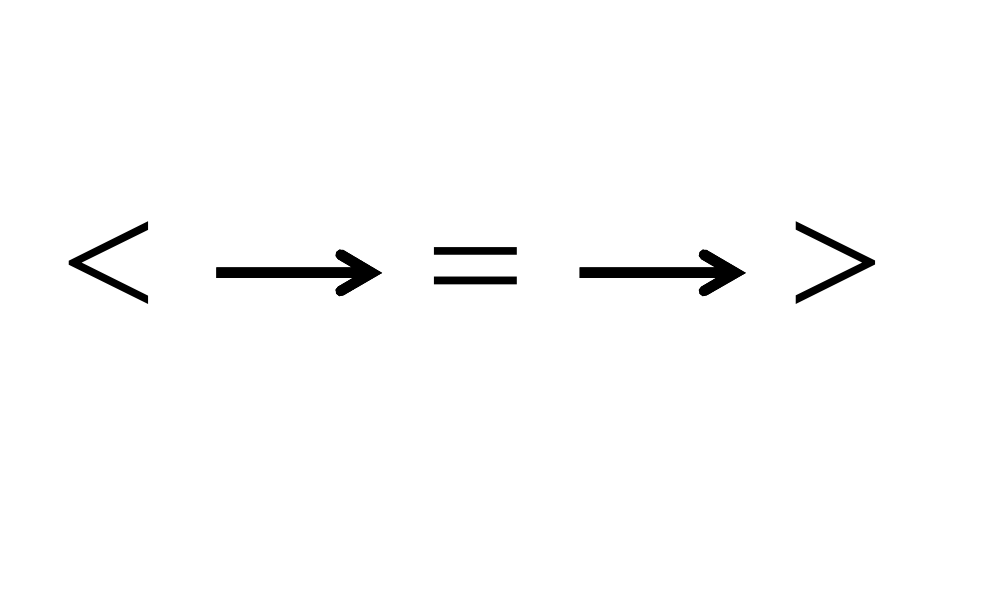}
\end{subfigure}
&\quad &
\begin{subfigure}[b]{0.25\textwidth}
 \includegraphics[width=\textwidth]{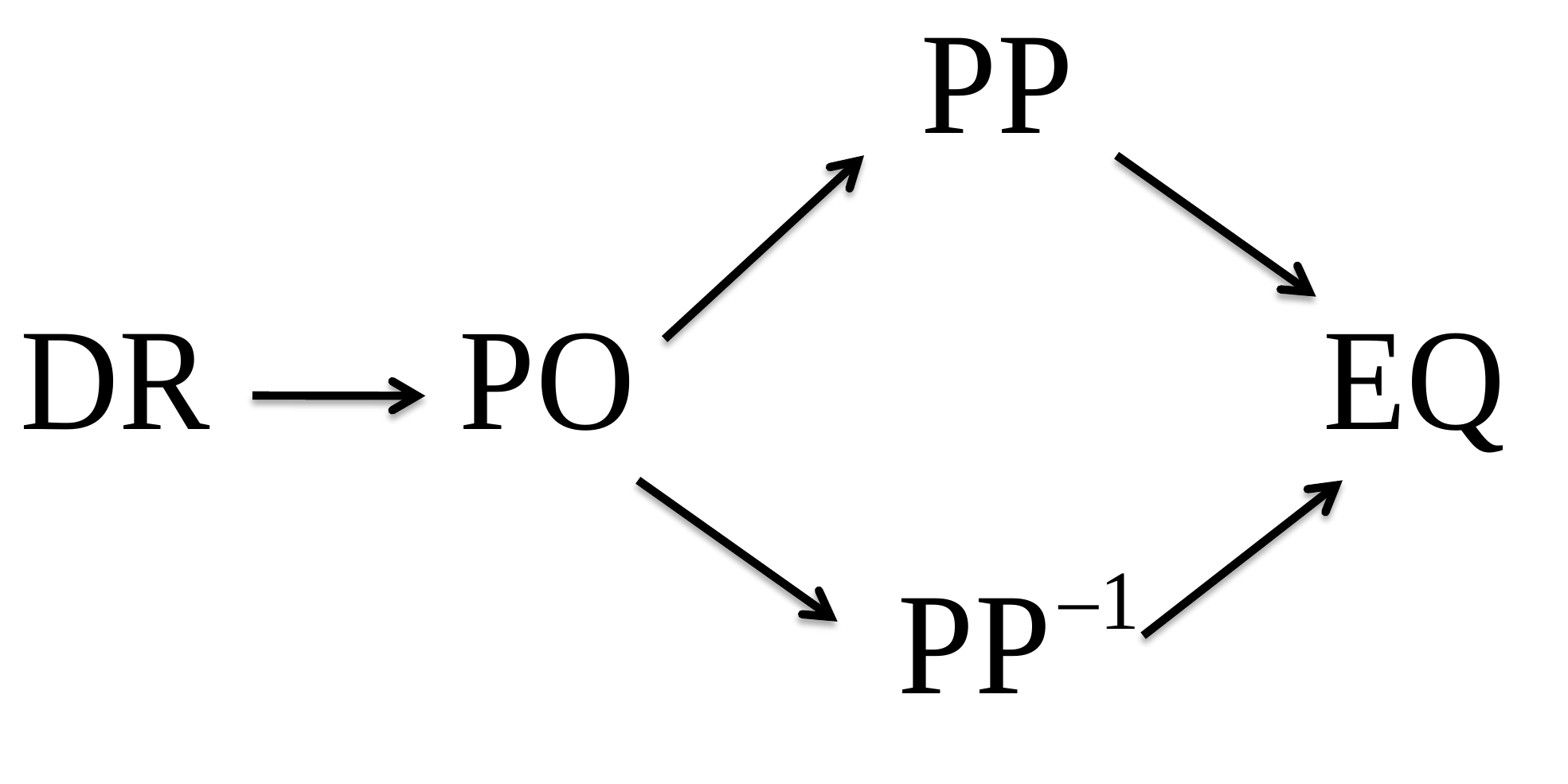}
\end{subfigure}
&\quad &
\begin{subfigure}[b]{0.4\textwidth}
 \includegraphics[width=\textwidth]{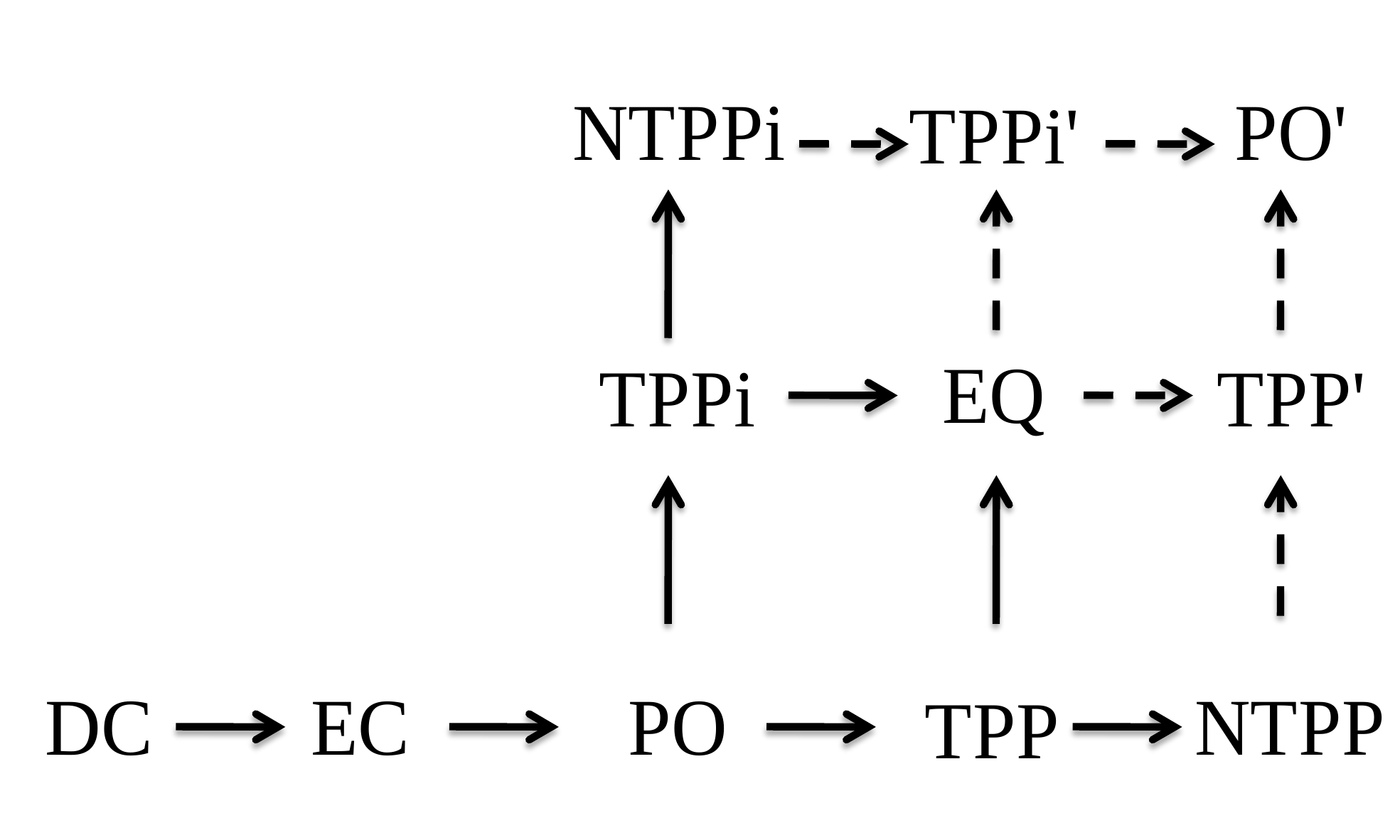}
\end{subfigure}
\end{tabular}
\caption{CNG of PA, RCC5, and RCC8}\label{fig:CNG}
\end{figure}

The CNG of CRA is constructed by using the CNG of PA. For example, note that $<$ and $=$ are conceptual neighbours in the CNG of PA, and $NW$ is defined as $x<x'$ and $y>y'$ and $N$ is defined as $x=x'$ and $y>y'$. Then $N$ and $NW$ should be conceptual neighbours in CRA. The complete CNG of CRA is given in \cite{Ligozat98} and the subclass of convex CRA relations corresponds to the maximal distributive subalgebra that is the Cartesian product of $\mathcal{C}_{\mathrm{PA}}$ and itself.
Like CRA, the CNG of RA is constructed by using the CNG of IA. The subclass of convex RA relations \cite{BalbianiCC99} is the maximal distributive subalgebra that is the Cartesian product of $\mathcal{C}_{\mathrm{IA}}$ and itself.

For RCC8, the situation is a little different. We need to revise the CNG by introducing three imaginary relations ${\btpp}',{\btppi}'$ and ${\bpo}'$ (see Figure~\ref{fig:CNG}, right). After this modification, Chandra and Pujari  \cite{ChandraP05} identified the class of convex RCC8 relations, which is precisely the maximal distributive subalgebra $\mathcal{D}^8_{41}$ specified in \cite{AIJPrime}.

A natural question arises as, ``Can we obtain each maximal distributive subalgebra by designing an appropriate CNG?" The answer seems negative as the maximal distributive subalgebra $\mathcal{S}_{\mathrm{PA}}$ contains $\ne$ but does not contain either $\leq$ or $\geq$. 

\subsection{Relation with Classical CSPs}
For finite domain CSPs, Montanari observed properties similar to the distributivity in this paper. In \cite{Montanari}, Montanari defined two different concept related to distributivity. One is a \emph{distributive set of relations w.r.t. set $X_k$} and the other is \emph{star-distributive constraint network}. 
The second concept is very similar to our notion of distributivity, except that it only requires the relations to form a closure w.r.t. the network. A constraint network over a distributive subalgebra is always star-distributive, but it is not clear whether a star-distributive network is always over a distributive subalgebra. 

As we have seen, relations in a distributive subalgebra exhibit convexity in Helly's sense. In finite CSP, row convex constraints \cite{Beek1995} and (the more general) tree convex constraints \cite{ZhangF08} enjoy a similar property, which is specified w.r.t. the ``rows'' or ``images'' of the constraints rather than the constraints themselves. The relations $R,S,T$ below  are all CRC constraints. Moreover, we have $R\circw (S\cap T) \ne R\circw S \cap R\circw T$ and $R\cap S \ne \varnothing$, $R \cap T \ne \varnothing$, $S\cap T \ne \varnothing$ but $R\cap S\cap T = \varnothing$. This shows that CRC constraints are not always distributive and do not always satisfy Helly's condition \eqref{eq:helly}.


\vspace*{-3mm}

\begin{table*}
  \centering
    \begin{tabular}{cccccc}
     $\left(\begin{array}{ccc}
     1  &  0   &  0\\
     1  &   1  &   0\\
     0   & 0   &  1
  \end{array}\right)$
  &
   $\left(\begin{array}{ccc}
     1   &  1   &  1\\
     0  &   0  &   1\\
     0   & 0   &  1
  \end{array}\right)$
  &
   $\left(\begin{array}{ccc}
     0   &  0   &  1\\
     1  &   1  &   1\\
     0   & 1  &  0
  \end{array}\right)$
  &
  $\left(\begin{array}{ccc}
     0   &  0   &  1\\
     0  &  0  &   1\\
    0  & 0  &  0
  \end{array}\right)$
  & &
   $\left(\begin{array}{ccc}
     0   &  0   &  1\\
     1  &  1  &   1\\
    0 & 0  &  0
  \end{array}\right)$\\
$R$ & $S$ &$T$ & $R\circw (S\cap T)$ & \quad & \quad $ R\circw S \cap R\circw T$
  \end{tabular}
  \end{table*}%




\section{Conclusion}\label{conclusion}
In this paper, we gave a detailed discussion of the important concept of distributive subalgebra proposed in a recent work \cite{AIJPrime}. We proved that distributive subalgebras are exactly subalgebras which are Helly in our sense and found all maximal distributive subalgebras for PA, IA, RCC5/8, CRA, and RA. We also proposed two nice properties of distributive subalgebras which will be used for efficient reasoning of large sparse constraint networks. Future work will implement and empirically evaluate and compare these two methods by using real datasets. 

\section*{Appendix}
Here we give a detailed proof of Theorem~\ref{thm:pc-wgc}. 
\setcounter{theorem}{1}
\begin{theorem}
Let $\qcm$ be a qualitative calculus that satisfies \eqref{eq:ra} and \eqref{eq:pc->c}. Suppose $\mathcal{S}$ is a distributive subalgebra of $\qcm$. Then every path consistent network over $\mathcal{S}$ is weakly globally consistent and minimal.
\end{theorem}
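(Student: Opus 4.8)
The plan is to reduce the whole statement to weak global consistency: the excerpt already records that every weakly globally consistent network is minimal, so once weak global consistency is established, minimality comes for free. To prove weak global consistency I would induct on $|V\setminus V'|$, which reduces everything to a single one-vertex extension lemma: given the path consistent network $\network$ over the distributive subalgebra $\mathcal{S}$ (which is Helly by Theorem~\ref{thm:dis=helly}), a subset $V'\subsetneq V$, a vertex $u\in V\setminus V'$, and a consistent scenario $\Theta=\{\delta_{ij}\}$ of $\network{\downarrow}_{V'}$, one can choose a basic relation $\theta_{iu}\subseteq R_{iu}$ for each $v_i\in V'$ so that $\Theta$ together with the $\theta_{iu}$ is a consistent scenario of $\network{\downarrow}_{V'\cup\{u\}}$. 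Minimality is then the instance $V'=\{v_i,v_j\}$, $\Theta=\{\alpha\}$ for a prescribed basic relation $\alpha\subseteq T_{ij}$ (a single basic relation is trivially a consistent two-variable scenario), since any solution of the extended scenario realises $\alpha$ between $v_i$ and $v_j$.

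For the extension lemma I would first form, on each star edge, the candidate relation $\widehat{T}_i=\bigcap_{v_j\in V'}\delta_{ij}\circw R_{ju}$ (the term $j=i$ contributes $R_{iu}$). Two properties must be checked. First, each $\widehat{T}_i$ is nonempty: by the Helly property it suffices to verify that its defining relations meet pairwise, i.e.\ $\delta_{ij}\circw R_{ju}\cap\delta_{ij'}\circw R_{j'u}\neq\varnothing$, and applying the cycle law twice rewrites this as $(R_{ju}\circw R_{uj'})\cap(\delta_{ji}\circw\delta_{ij'})\neq\varnothing$, which holds because $\delta_{jj'}$ lies in both relations --- in the first since $\delta_{jj'}\subseteq R_{jj'}\subseteq R_{ju}\circw R_{uj'}$ by path consistency of $\network$, and in the second since $\delta_{jj'}\subseteq\delta_{ji}\circw\delta_{ij'}$ by path consistency of the scenario $\Theta$. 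This is exactly the computation already carried out in the proof of Theorem~\ref{thm:ve}. Second, the ``star triangle'' inclusions $\widehat{T}_i\subseteq\delta_{ij}\circw\widehat{T}_j$ hold: one pushes the weak composition through the intersection defining $\widehat{T}_j$, which is legitimate precisely because $\mathcal{S}$ is distributive, and then uses $\delta_{ik}\subseteq\delta_{ij}\circw\delta_{jk}$.

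It remains to collapse the star relations $\widehat{T}_i$ to single basic relations while preserving consistency, and this backtrack-free refinement is the step I expect to be the main obstacle. I would refine the star edges one at a time: having fixed some edges to atoms and kept the rest as their current labels $U_i$, with the invariant that all $U_i$ are nonempty and the star triangles $U_i\subseteq U_k\circw\delta_{ki}$ hold, I pick any atom $r_i\in U_i$, set the edge $u$--$v_i$ to $r_i$, and re-enforce path consistency. The point is that the invariant makes this never fail: each other label updates to $U_k\cap r_i\circw\delta_{ik}$, whose nonemptiness is equivalent by the cycle law to $r_i\subseteq U_k\circw\delta_{ki}$, and this holds for every $k$ by the star triangle $U_i\subseteq U_k\circw\delta_{ki}$, so in fact \emph{any} choice of $r_i\in U_i$ is safe. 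Distributivity is again what guarantees that the updated labels continue to satisfy the star triangles, so the invariant is maintained. When all star edges are atoms, the cycle law turns the star triangles into the remaining path-consistency conditions (including the through-$u$ triangles $\delta_{ij}\subseteq r_i^{-1}\circw r_j$), so the resulting all-basic network on $V'\cup\{u\}$ is path consistent; by \eqref{eq:pc->c} it has a solution, which is precisely the consistent scenario extending $\Theta$. The delicate part throughout is the bookkeeping that keeps the refinement loop dead-end-free, and this is exactly where the Helly/distributivity hypothesis is indispensable.
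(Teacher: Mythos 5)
Your proposal is correct and follows essentially the same route as the paper's appendix proof: extend a consistent scenario one vertex at a time, form the star candidates $\widehat{T}_i=\bigcap_{j}\delta_{ij}\circw R_{ju}$, prove them nonempty via the Helly property and a double application of the cycle law exactly as the paper does, collapse the star edges to atoms one by one using distributivity, and conclude by applying \eqref{eq:pc->c} to the resulting path consistent basic network. The only divergence is bookkeeping in the collapse phase --- you maintain updated labels $U_k$ with a star-triangle invariant showing that \emph{any} atom choice is safe, whereas the paper keeps the fixed relations $\widehat{R}_{k+1,i}$ and re-establishes nonemptiness of the constraining intersection at each step (its Statements 1 and 2) --- which is a presentational variant of the same argument, not a different one (modulo a harmless converse-orientation slip in your update formula $U_k\cap r_i\circw\delta_{ik}$).
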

\begin{proof}
 We first note that, since  $\qcm$ satisfies \eqref{eq:ra}, any three relations in $\qcm$ have the cycle law property \eqref{eq:cycle} (see e.g. \cite{duntsch2005relation}), and by Theorem~\ref{thm:dis=helly} any distributive subalgebra $\mathcal{S}$ of $\qcm$ is Helly and hence satisfies \eqref{eq:helly+}.

Suppose $\Gamma = \{v_i R_{ij} v_j: 1\le i,j\le n\}$ is a path consistent network over $\mathcal{S}$. Write $V_{k}=\{v_1,v_2,\ldots,v_k\}$ and $W^{k+1}_{t}=V_t \cup \{v_{k+1}\}$ for $1\le k < n$ and $1\le t \le k$. Let $\Delta_{V_k}=\{v_i \delta_{ij} v_j: v_i,v_j \in V_k\}$ be a consistent scenario of $\Gamma{\downarrow}_{V_k} = \{v_i R_{ij} v_j: v_i,v_j \in V_k\}$ (see Figure~\ref{fig:vk}). 
\begin{figure}
 \centering
 \begin{subfigure}[b]{0.45\textwidth}
 \includegraphics[width=\textwidth]{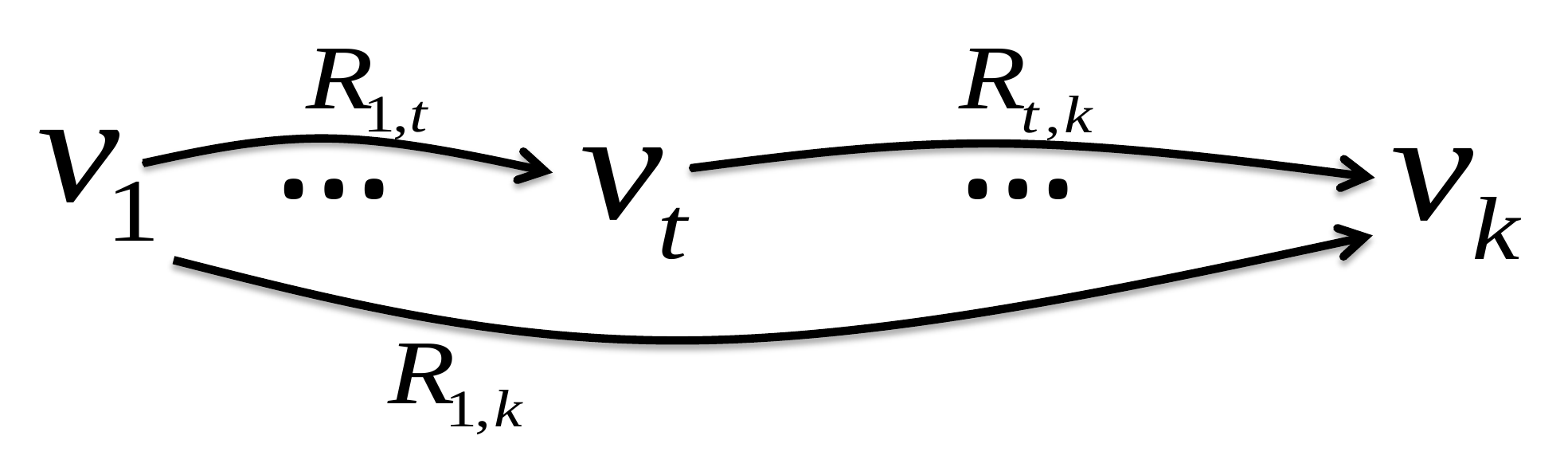}
 \caption{$\Gamma{\downarrow}_{V_k}$}
\end{subfigure}
 \begin{subfigure}[b]{0.45\textwidth}
 \includegraphics[width=\textwidth]{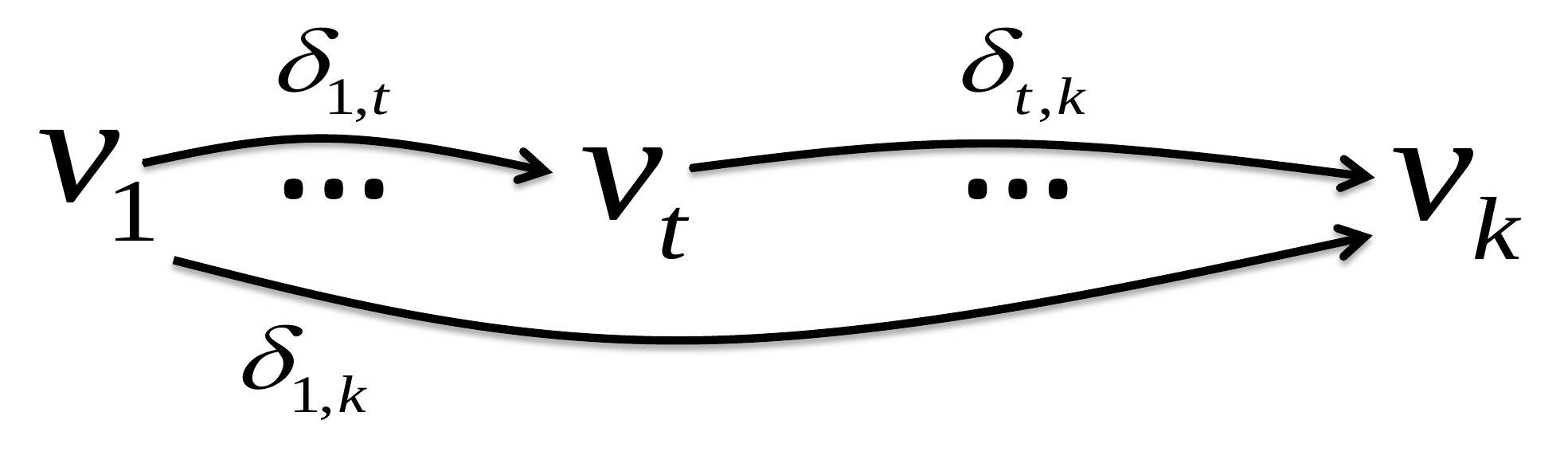}
 \caption{$\Delta_{V_k}$}
\end{subfigure}
 \caption{Illustration of $\Gamma{\downarrow}_{V_k}$ and $\Delta_{V_k}$ in the proof.}\label{fig:vk}
\end{figure}

We show $\Delta_{V_k}$ can be extended to a consistent scenario $\Delta_{V_{k+1}}$ of $\Gamma{\downarrow}_{V_{k+1}}$. 
Note that any path consistent basic network over $\qcm$ is consistent by \eqref{eq:pc->c}.

Let $\widehat{R}_{k+1,i} = \bigcap_{j=1}^k  (R_{k+1,j}\circw \delta_{j,i})$ for $i=1,\ldots,k$. It is easy to see $\widehat{R}_{k+1,i} \subseteq R_{k+1,i}$. Our idea is as follows:
\begin{enumerate}[Step 1.]
 \item Choose an arbitrary basic relation $\delta_{k+1,1}$ in $\widehat{R}_{k+1,1}$. 
 \item Extend a consistent scenario $\Delta_{W^{k+1}_{t}}$ to a consistent scenario $\Delta_{W^{k+1}_{t+1}}$ by choosing a certain basic relation $\delta_{k+1,t+1}$ in $\widehat{R}_{k+1,t+1}$, together with the constraints $\{v_i \delta_{i,t+1} v_{t+1} | 1 \le i \le t\}$ in $\Delta_{V_k}$. See Figure~\ref{fig:extend} for illustration of $\Delta_{W^{k+1}_{t}}$ and $\Delta_{W^{k+1}_{t+1}}$.
 \item Repeat Step 2 for $1\le t \le k-1$ until a consistent scenario $\Delta_{V_{k+1}}$ of $\Gamma{\downarrow}_{V_{k+1}}=\Gamma{\downarrow}_{W^{k+1}_k}$ is obtained.
 \end{enumerate}
 \begin{figure}
  \centering
  \begin{subfigure}[b]{0.45\textwidth}
 \includegraphics[width=\textwidth]{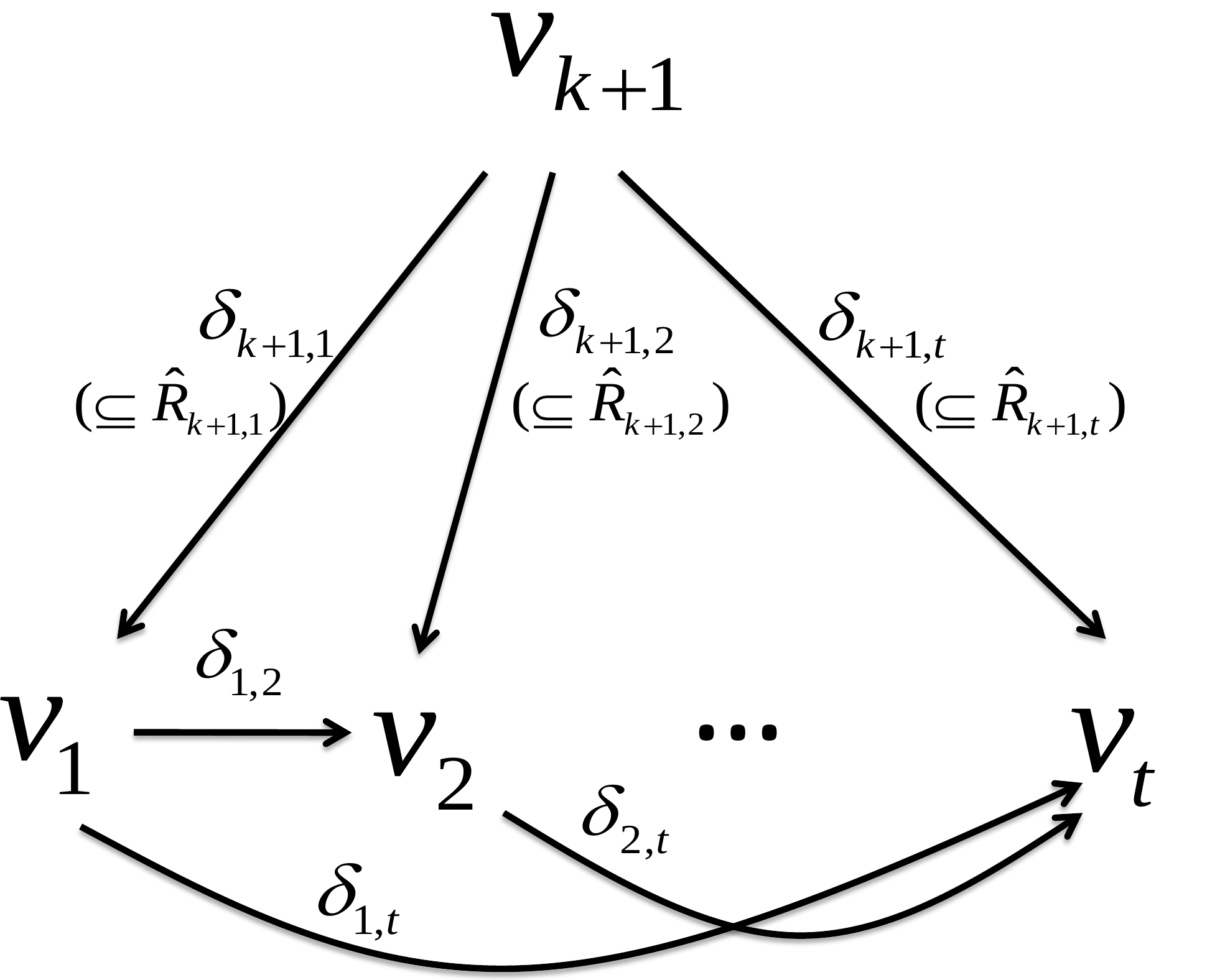}
 \caption{$\Delta_{W^{k+1}_{t}}$}
\end{subfigure}
  \begin{subfigure}[b]{0.45\textwidth}
 \includegraphics[width=\textwidth]{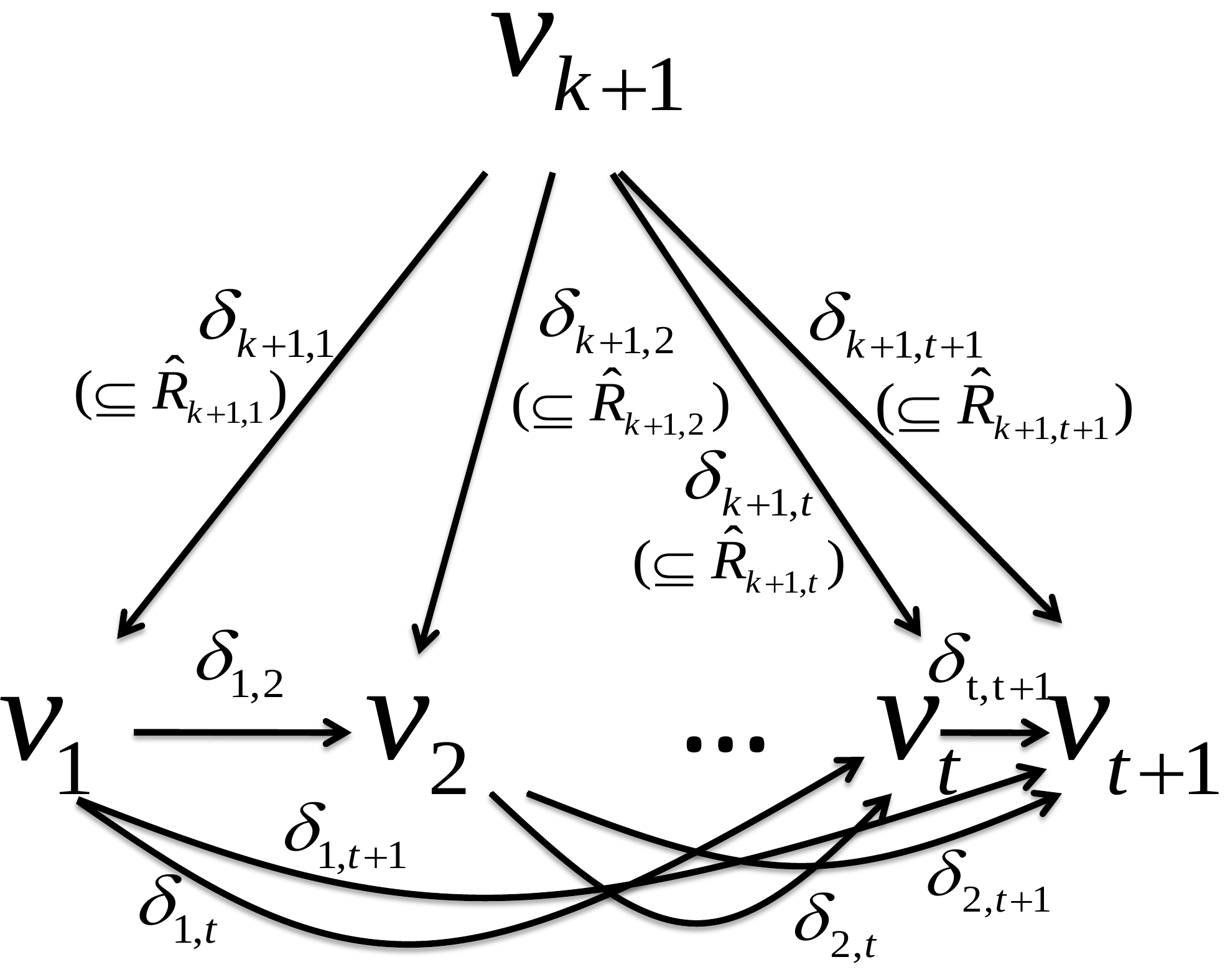}
 \caption{$\Delta_{W^{k+1}_{t+1}}$}
\end{subfigure}
  \caption{Illustration of $\Delta_{W^{k+1}_{t}}$ and $\Delta_{W^{k+1}_{t+1}}$ in the proof.}\label{fig:extend}
 \end{figure}

To show Step 1 can be achieved, we prove that $\widehat{R}_{k+1,i} = \bigcap_{j=1}^k  (R_{k+1,j}\circw \delta_{j,i}) \ne \emptyset$ for all $1\le i \le k$. By applying the cycle law (see \eqref{eq:cycle}), for $1 \le j,j' \le k$  we have
\begin{align*}
(R_{k+1,j}\circw \delta_{ji}) \cap (R_{k+1,j'}\circw \delta_{j'i}) \not=\varnothing &\quad \mbox{iff}\quad  ((R_{k+1,j'}\circw \delta_{j'i})\circw \delta_{ij}) \cap R_{k+1,j}\not=\varnothing\\
&\quad \mbox{iff}\quad (R_{j',k+1}\circw R_{k+1,j}) \cap (\delta_{j'i}\circw \delta_{ij})\not=\varnothing.
\end{align*}
Since $\Gamma$ is path consistent and the partial scenario $\Delta_{V_k}$ is also path consistent, we have $\delta_{j'j} \subseteq R_{j',j} \subseteq R_{j',k+1}\circw R_{k+1,j}$ and $\delta_{j'j} \subseteq \delta_{j'i}\circw \delta_{ij}$. Therefore $(R_{j',k+1}\circw R_{k+1,j}) \cap (\delta_{j'i}\circw \delta_{ij})\not=\varnothing$ and hence $(R_{k+1,j}\circw \delta_{ji}) \cap (R_{k+1,j'}\circw \delta_{j'i}) \not=\varnothing$ for any $1\le j,j'\le k$. Note $\mathcal{S}$ is Helly by Proposition~\ref{thm:dis=helly}, we know $\widehat{R}_{k+1,i} = \bigcap_{j=1}^k  (R_{k+1,j}\circw \delta_{j,i}) \ne \emptyset$ for all $1\le i \le k$. 

To show Step 2 can be achieved, we only need to find a basic relation $\delta_{k+1,t+1}$ in $\widehat{R}_{k+1,t+1}$ such that $\Delta_{W^{k+1}_{t}} \cup \{v_{k+1} \delta_{k+1,t+1} v_{t+1}\}$ is path consistent, for $t=1,\ldots,k-1$. 

With the following statements, we can show the existence of such $\delta_{k+1,t+1}$.
\begin{enumerate}[{Statement} 1.]
  \item $\delta_{k+1,i} \circw \delta_{i,t+1} \cap \widehat{R}_{k+1,t+1} \ne \varnothing$ for any $1\le i \le t$.
 \item  $\delta_{k+1,i} \circw \delta_{i,t+1} \cap \delta_{k+1,j} \circw \delta_{j,t+1} \ne \varnothing$ for any $1\le i,j \le t$.
\end{enumerate}
In fact, from the above statements and that $\mathcal{S}$ is Helly, we know $(\bigcap_{i=1}^t (\delta_{k+1,i} \circw \delta_{i,t+1})) \cap \widehat{R}_{k+1,t+1} \ne \emptyset$. Thus, there exists a $\delta_{k+1,t+1}$ in $\widehat{R}_{k+1,t+1}$ such that
\begin{equation}\label{eq:deltaInt}
 (\bigcap_{i=1}^t (\delta_{k+1,i} \circw \delta_{i,t+1})) \cap \delta_{k+1,t+1} \ne \emptyset. 
\end{equation}
To show this $\delta_{k+1,t+1}$ actually extends $\Delta_{W^{k+1}_{t}}$, we also need to prove that $\Delta_{W^{k+1}_{t}} \cup \{v_i \delta_{i,t+1} v_{t+1} : 1 \le i \le t\} \cup  \{v_{k+1} \delta_{k+1,t+1} v_{t+1}\}$ is path consistent. Note we only need to show $\delta_{k+1,t+1} \subseteq \delta_{k+1,i} \circw \delta_{i,t+1}$ for any $1\le i \le t$, because $\Delta_{W^{k+1}_{t}}$ and $\Delta_{V_{t+1}}(\subseteq \Delta_{V_{k}})$ are both path consistent. This will be true if \eqref{eq:deltaInt} is true. Therefore, in the following, we show the two statements above are actually true.

For Statement 1, note 
\begin{align*}
(\bigcap_{j=1}^k  (R_{k+1,j}\circw \delta_{j,t+1})) \circw \delta_{t+1,i} 
&=\bigcap_{j=1}^k  (R_{k+1,j}\circw \delta_{j,t+1}  \circw \delta_{t+1,i})\\
&\supseteq \bigcap_{j=1}^k (R_{k+1,j} \circw \delta_{ji}).
\end{align*}
Then $\widehat{R}_{k+1,t+1} \circw \delta_{t+1,i} \supseteq \widehat{R}_{k+1,i} \supseteq \delta_{k+1,i} \ne \varnothing$, that is $\delta_{k+1,i} \cap (\widehat{R}_{k+1,t+1} \circw \delta_{t+1,i}) \ne \varnothing$. By cycle law, we have $(\delta_{k+1,i} \circw \delta_{i,t+1}) \cap \widehat{R}_{k+1,t+1} \ne \varnothing$ for any $1\le i \le t$.

For Statement 2, for any $1 \le i,j \le t$ we have
\begin{align*}
(\delta_{k+1,i}\circw \delta_{i,t+1}) \cap (\delta_{k+1,j}\circw \delta_{j,t+1}) \ne \varnothing 
&\quad \mbox{iff}\quad   (\delta_{i,k+1} \circw \delta_{k+1,j} \circw \delta_{i,t+1}) \cap \delta_{j,t+1} \ne \emptyset\\
&\quad \mbox{iff}\quad (\delta_{i,k+1} \circw \delta_{k+1,j}) \cap (\delta_{i,t+1}  \circw \delta_{t+1,j}) \ne \emptyset.
\end{align*}
Because $\Delta_{W^{k+1}_{t}}$ is a (path) consistent scenario, we have $\delta_{ij} \subseteq \delta_{i,k+1} \circw \delta_{k+1,j}$ for $1\le i,j\le t$. Note $\Delta_{V_{t+1}} (\subseteq \Delta_{V_{k}})$ is also a (path) consistent scenario, we have $\delta_{ij} \subseteq \delta_{i,t+1} \circw \delta_{t+1,j}$. Then $(\delta_{i,k+1} \circw \delta_{k+1,j}) \cap (\delta_{i,t+1} \circw \delta_{t+1,j}) \supseteq \delta_{ij} \ne \varnothing$ for $1\le i,j\le t$, and hence $(\delta_{k+1,i}\circw \delta_{i,t+1}) \cap (\delta_{k+1,j}\circw \delta_{j,t+1}) \ne \varnothing$. \qed
\end{proof}


\bibliographystyle{plain}
\bibliography{refdis}
\end{document}